\theoremstyle{plain}
\newtheorem{theorem}{Theorem}
\newtheorem{lemma}{Lemma}
\newtheorem{proposition}{Proposition}
\theoremstyle{definition}
\newtheorem{definition}{Definition}
\newtheorem{assumption}{Assumption}
\theoremstyle{remark}
\tikzset{
	>=stealth',
	ampersand replacement=\&,
	font=\footnotesize,
	every node/.style={align=center},
	node/.style={shape=circle, thick, draw, minimum size=2.5mm, inner sep=0},
	observed node/.style={node, fill=lightgray},
	block/.style={
		rounded corners=4pt, draw, fill=white, inner sep=6pt, outer sep=1pt},
	box/.style={rounded corners=4pt, inner sep=1.8ex, draw=#1!50,
		fill=#1!20},
	dot/.style={fill, circle, inner sep=0.7pt},
	image/.style={inner sep=0pt, outer sep=3pt},
	tight/.style={inner sep=0pt, outer sep=0pt},
	state/.style={state without output, fill=white, inner sep=0pt, minimum size=8pt},
	flow/.style={->, thick, >=spaced stealth'},
	arrow/.style={->, semithick},
	pin line/.style={-, very thin, draw=gray},
	label text/.style={draw=none, font=\scriptsize, inner sep=1pt},
	shorten <>/.style={ shorten >=#1, shorten <=#1 },
}
\colorlet{noshcolor}{orange!50!black}
\colorlet{delayedqcolor}{teal}
\colorlet{shcolor}{green!50!black}
\colorlet{shinvcolor}{blue!50!lightgray}
\colorlet{bad1color}{teal}
\colorlet{bad2color}{purple}
\tikzset{
	our-rs/.style={draw=shcolor},
	inv-rs/.style={draw=shinvcolor, densely dashed},
	no-rs/.style={draw=noshcolor, densely dashdotted},
	delayedq-rs/.style={draw=delayedqcolor, dashdotted},
	bad1-rs/.style={draw=bad1color, densely dash dot dot},
	bad2-rs/.style={draw=bad2color, densely dotted},
}
\definecolor{grid-r}{RGB}{255,0,0}
\definecolor{grid-g}{RGB}{0,255,0}
\definecolor{grid-b}{RGB}{180,180,232}
\definecolor{grid-y}{RGB}{255,255,0}
\definecolor{grid-p}{RGB}{255,192,203}
\definecolor{grid-o}{RGB}{255,165,0}
\definecolor{grid-B}{RGB}{214,98,98}
\definecolor{grid-G}{RGB}{190,190,190}
\DeclarePairedDelimiter{\abs}{\lvert}{\rvert}
\newcommand{\Reals}{\mathbb{R}}
\newcommand{\Naturals}{\mathbb{N}}
\newcommand{\Expected}{\mathbb{E}}
\newcommand{\Given}{\mid}
\newcommand{\Indicator}{\mathbb{I}}
\renewcommand{\implies}{\Rightarrow}
\newcommand{\Const}[1]{\mathit{#1}}
\renewcommand{\implies}{\Rightarrow}
\newcommand{\SetSym}[1]{\mathcal{#1}}
\newcommand{\Distribution}[1]{\Delta(#1)}
\newcommand{\Trace}{\pi}
\newcommand{\Policy}{\rho}
\newcommand{\PolicySpace}{P}  
\newcommand{\States}{\SetSym{S}}
\newcommand{\Actions}{\SetSym{A}}
\newcommand{\Model}{\SetSym{M}}
\newcommand{\Rewards}{\SetSym{R}}
\newcommand{\Mapping}{\phi}
\newcommand{\Potential}{\Phi}
\newcommand{\Automa}{\mathcal{D}}
\newcommand{\Shaped}[1]{#1^{s}}
\newcommand{\Biased}[1]{#1^{b}}
\newcommand{\BiasedStar}[1]{#1^{b*}}
\newcommand{\Est}[1]{\hat{#1}}
\newcommand{\Abst}[1]{\bar{#1}}
\newcommand{\Goals}{\SetSym{G}}
\newcommand{\Options}{\SetSym{O}}
\newcommand{\ReturnSym}{G}
\newcommand{\FromBlockTo}[3]{{#3}, {#2}}
\newcommand{\ExplorationPolicy}{\Policy_e}
\newcommand{\RlAlgo}{\mathcal{L}}
\newcommand{\POptAccurate}{\epsilon}
\newcommand{\PHomogeneity}{\nu}
\newenvironment{theoremmain}[1]{\theorem}{\endtheorem}
\newenvironment{lemmamain}[1]{\lemma}{\endlemma}
\newenvironment{propositionmain}[1]{\proposition}{\endproposition}
\newcommand{\refmain}[1]{\ref{#1}}
\newcommand{\eqrefmain}[1]{\eqref{#1}}
\newcommand{\refthis}[1]{\ref{#1}}
\newcommand{\eqrefthis}[1]{(\ref{#1})}
\title{Exploiting Multiple Abstractions in Episodic RL via Reward Shaping}
\author{
	Roberto Cipollone\textsuperscript{\rm 1},
	Giuseppe De Giacomo\textsuperscript{\rm 1,\rm 2},
	Marco Favorito\textsuperscript{\rm 3},
	Luca Iocchi\textsuperscript{\rm 1},
	Fabio Patrizi\textsuperscript{\rm 1}
}
\begin{document}

\maketitle

\begin{abstract}
	One major limitation to the applicability of Reinforcement Learning (RL)
	to many practical domains is the large number of samples
	required to learn an optimal policy.
	To address this problem and improve learning efficiency, 
	we consider a linear hierarchy of abstraction layers
	of the Markov Decision Process (MDP) underlying the target domain.
	Each layer is an MDP representing a coarser model of the one immediately below in the hierarchy.
	In this work, we propose a novel form of Reward Shaping where 
	the solution obtained at the abstract level is used to offer
	rewards to the more concrete MDP, in such a way that
	the abstract solution guides the learning in the more complex domain.
	In contrast with other works in Hierarchical RL, our technique has few requirements in the 
	design of the abstract models and it is also tolerant to modeling errors, thus making the
	proposed approach practical.
	We formally analyze the relationship between the abstract models and 
	the exploration heuristic induced in the lower-level domain.
	Moreover, we prove that the method guarantees optimal convergence
	and we demonstrate its effectiveness experimentally.
\end{abstract}

\section{Introduction}
In Reinforcement Learning (RL), agents have no complete 
model available to predict the outcomes of their actions. 
Since coming up with a complete and faithful model of the world is generally difficult,
this allows for the wide applicability of RL algorithms.
Nonetheless, such a lack of knowledge also demands a significant number of interactions with the environment,
before a (near-)optimal policy can be estimated.
As a result, most of the successes of RL come from the digital world (e.g., video games, simulated environments),
especially those in which a large number of samples can be easily generated,
while applications to real environments, such as robotic scenarios, 
are still rare.

Many RL tasks are \emph{goal-oriented},
which implies that when the goal states are \emph{sparse}, so are the rewards.
This is a well-known challenging scenario for RL, which further increases the amount of samples required.
Unfortunately, sparse goal states are very common,
as they may arise in simple tasks,
such as reaching specific configurations in large state spaces,
or, even in modest environments, for complex target behaviours, such as the successful completion of a sequence of smaller tasks \cite{brafman_2018_LTLfLDLf,icarte2018using}.

In order to improve sample efficiency, \emph{Hierarchical} RL approaches \cite{hutsebaut-buysse_2022_HierarchicalReinforcementb} have been proposed to decompose 
a complex problem into subtasks that are easier to solve individually.
In this context, an \emph{abstraction} is a simplified, coarser model that reflects the 
decomposition induced over the \emph{ground}, more complex environment \cite{li06towards}.

In this paper, we consider a linear hierarchy of abstractions of the Markov Decison Process (MDP) underlying the target domain.
Each layer in this hierarchy is a simplified model, still represented as an MDP, of the one immediately below.
A simple example is that of an agent moving in a map.
The states of the ground MDP may capture the real pose of the agent, as continuous coordinates and orientation.
The states of its abstraction, instead, may provide a coarser description, obtained by coordinate discretization, semantic map labelling (i.e., associating metric poses to semantic labels), or by projecting out state variables.
Ultimately, such a compression corresponds to partitioning the ground state space into abstract states, implicitly defining a mapping from the former to the latter.
The action spaces of the two models can also differ, in general, as they would include the actions that are best appropriate for each representation.
%
Simulators are commonly used in RL and robotics.
Often, simply though a different configuration of the same software, e.g. noiseless or ideal movement actions,
it is possible to obtain a simplified environment which acts as an abstraction.
Importantly, this simplified model also applies to a variety of tasks that may be defined over the same domain.

By taking advantage of the abstraction hierarchy, we devise an approach which allows any off-policy RL algorithm to efficiently explore the ground environment, while guaranteeing optimal convergence.
The core intuition is that the value function $\Abst{V}^*$ of the abstract MDP~$\Abst\Model$ can be exploited to guide learning on the lower model~$\Model$.
Technically, we adopt a variant of Reward Shaping (RS), whose potential is generated from~$\Abst{V}^*$.
This way, when learning in~$\Model$, the agent is \emph{biased} 
to visit first the states that correspond in~$\Abst\Model$ to those that are preferred by the abstract policy,
thus trying, in a sense, to replicate its behavior in~$\Model$.
%
In order to guarantee effectiveness of the exploration bias, it is essential that the transitions of~$\Abst\Model$ are good proxies for the dynamics of~$\Model$.
We characterize this relationship by identifying conditions under which the abstraction induces an exploration policy that is consistent with the ground domain.

The contributions of this work include:
\begin{inparaenum}[(i)]
    \item the definition of a novel RS schema that allows for transferring the acquired experience from coarser
        models to the more concrete domains in the abstraction hierarchy;
    \item the derivation of a relationship between each abstraction and the exploration policy that is induced in the lower MDP;
    \item the identification of \emph{abstract value approximation}, as a new condition to evaluate when an abstraction can be a good representative of ground MDP values;
	\item an experimental analysis showing that our approach significantly improves sample-efficiency and that modelling errors yield only a limited performance degradation.
\end{inparaenum}

\section{Preliminaries}

\paragraph{Notation}
Any total function $f: \SetSym{X} \to \SetSym{Y}$ induces a partition on its
domain~$\SetSym{X}$, such that two elements are in the same block iff $f(x) = f(x')$. 
We denote blocks of the partition by the elements of $\SetSym{Y}$, thus writing $x \in y$
instead of $x \in \{x' \Given {x' \in \SetSym{X}}, f(x') = y\}$.
With $\Distribution{\SetSym{Y}}$, we denote the class of probability distributions
over a set~$\SetSym{Y}$.
Also, ${f: \SetSym{X} \to \Distribution{\SetSym{Y}}}$ is a function returning a probability distribution (i.e., $f: \SetSym{X},
\SetSym{Y} \to [0, 1]$, with $\sum_{y \in \SetSym{Y}} f(x, y) = 1$, for
all $x \in \SetSym{X}$).

\paragraph{Markov Decision Processes (MDPs)}
A Markov Decision Process (MDP) $\Model$ is a tuple $\langle \States, \Actions, T, R, \gamma
\rangle$, where $\States$ is a set of states, $\Actions$ is a set of actions, $T:
\States \times \Actions \to \Distribution{\States}$ is the probabilistic transition
function, $R: \States \times \Actions \times \States \to \SetSym{R}$ is
the reward function (for $\Rewards \coloneqq [r_-, r_+] \subset \Reals$), and $0 < \gamma < 1$ is the discount factor.
A deterministic policy is a function $\Policy: \States \to \Actions$.
We follow the standard definitions for the value of a policy $V^\Policy(s)$, as the expected sum of discounted returns, the value of an action $Q^\Policy(s, a)$, and their respective optima $V^* = \max_{\Policy \in \PolicySpace} V^{\Policy}(s)$  and $Q^*(s, a)$ \cite{puterman1994}.
We may also write $Q(s, \Policy)$ to denote~$V^\Policy(s)$.
Reinforcement Learning (RL) is the task of learning an optimal policy in an MDP with unknown~$T$ and~$R$.

\begin{definition}
    \label{def:off-policy-learning}
    A RL learning algorithm is \emph{off-policy} if, for any MDP $\Model$
    and experience $(\Actions\Rewards\States)^t$ generated by a stochastic exploration policy $\ExplorationPolicy: \States, \Naturals \to \Distribution{\Actions}$ such that $\forall s, t, a: \ExplorationPolicy(s, t, a) > 0$,
    the algorithm converges to the optimal policy of $\Model$, as $t \to \infty$.
\end{definition}

\paragraph{Reward Shaping (RS)} Reward Shaping (RS) is a technique for learning in MDPs with sparse rewards, which occur rarely during exploration.
The purpose of RS is to guide the agent by exploiting some prior knowledge in the form of additional rewards:
$\Shaped{R}(s, a, s') \coloneqq R(s, a, s') + F(s, a, s')$, where $F$ is the \emph{shaping} function. 
In the classic approach, called \emph{Potential-Based RS}~\cite{ng1999policy} (simply called Reward Shaping from now on), the shaping function is defined in terms of a potential function, $\Potential: \States \to \Reals$, as:
\begin{equation}
    F(s, a, s') \coloneqq \gamma\,\Potential(s') - \Potential(s)
	\label{eq:pbrs}
\end{equation}
If an infinite horizon is considered, this definition and its variants \cite{wiewiora2003principled,devlin2012} guarantee that the set of optimal policies of~$\Model$ and $\Shaped\Model \coloneqq \langle \States, \Actions, T, \Shaped{R}, \gamma \rangle$ coincide.
Indeed, as shown by~\cite{wiewiora2003potential}, the Q-learning algorithm over $\Shaped{\Model}$ performs the same updates as Q-learning over $\Model$ with the modified Q-table initialization: $Q'_0(s,a)
\coloneqq {Q_0(s,a) + \Potential(s)}$.

\paragraph{Options}
An option~\cite{sutton1999between}, for an MDP $\Model$, is a temporally-extended action, defined as $o = \langle \SetSym{I}_o, \Policy_o, \beta_o \rangle$,
where $\SetSym{I}_o \subseteq \States$ is an initiation set,
${\Policy_o: \States \to \Actions}$ is the policy to execute,
and ${\beta_o: \States \to \{0, 1\}}$ is a termination condition that, from the current state, computes whether the option should terminate.
We write $Q^*(s, o)$ to denote the expected return of executing~$o$ until termination and following the optimal policy afterwards.

\paragraph{$\Mapping$-Relative Options}
\cite{abel_2020_ValuePreserving}\, Given an MDP~$\Model$ and a function $\Mapping:
\States \to \Abst\States$, an option $o = \langle \SetSym{I}_o, \Policy_o, \beta_o
\rangle$ of~$\Model$ is said to be \emph{$\Mapping$-relative} iff there is
some $\Abst{s} \in \Abst{\States}$ such that:
\begin{equation}
	\SetSym{I}_o = \{s \mid s \in \Abst{s}\},\quad
	\beta_o(s) = \Indicator(s \not\in \Abst{s}),\quad
	\Policy_o \in \PolicySpace_{\Abst{s}}
\end{equation}
where $\Indicator(\xi)$ equals 1 if $\xi$ is true, $0$ otherwise, and
$\PolicySpace_{\Abst{s}}$ is the set of policies of the form
$\Policy_{\Abst{s}}: \{ s \mid s \in \Abst{s} \} \to \Actions$.
In other words, policies of $\Mapping$-relative options are defined within some block~$\Abst{s}$ in the partition induced by~$\Mapping$
and they must terminate exactly when the agent leaves the block where it was initiated.

\section{Framework}

Consider an environment in which experience is costly to obtain.
This might be a complex simulation or an actual environment in which a physical robot is acting.
This is our \emph{ground} MDP~$\Model_0$ that we aim to solve while reducing the number of interactions with the environment.
Instead of learning on this MDP directly, we choose to solve a simplified, related problem, that we call the \emph{abstract} MDP.
This idea is not limited to a single abstraction.
Indeed, we consider a hierarchy of related MDPs $\Model_0, \Model_1, \dots, \Model_n$, of decreasing difficulty, where the experience acquired by an expert acting in $\Model_i$ can be exploited to speed up learning in the previous one,~$\Model_{i-1}$.

Associated to each MDP abstraction~$\Model_i$, we also assume the existence of a \emph{mapping} function ${\Mapping_i: \States_i \to \States_{i+1}}$, which projects states of~$\Model_i$ to states of its direct abstraction~$\Model_{i+1}$.
This induces a partition over $\States_i$, where each block contains
all states that are mapped through~$\Mapping_{i}$ to a single state in~$\Model_{i+1}$.
The existence of state mappings are a classic assumption in Hierarchical RL
\cite{ravindran_model_2002,abel_2016_OptimalBehavior,abel_2020_ValuePreserving},
which are easily obtained together with the abstract MDP.
Unlike other works, instead, we do not require any mapping between the action spaces.
This relationship will remain implicit.
This feature leaves high flexibility to the designers for defining the abstraction hierarchy.

An abstract model is therefore a suitable relaxation of the
environment dynamics. For example, in a navigation scenario, an abstraction could
contain actions that allow to just ``leave the room'', instead of navigating though
space with low-level controls. 
In Section~\ref{sec:quality}, we formalize this intuition by deriving a measure that quantifies how accurate an abstraction is with respect to the lower domain.

As a simple example, consider the grid world domain, the abstract MDP and the mapping in Figure~\ref{fig:rooms8}.
\begin{figure}
	\centering
	\begin{tikzpicture}[every node/.append style={font=\scriptsize}]
	    \node (gridworld) [image] {\includegraphics[width=7cm]{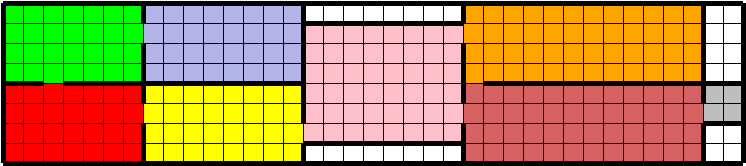}};
	    \begin{scope}[shift=(gridworld.north west), x=(gridworld.north east), y=(gridworld.south west)]
	        \node at (0.058, 0.34) {S};
	        \node at (0.957, 0.61) {G};
	    \end{scope}
	    \matrix [below=0.2cm of gridworld, nodes={state, minimum size=0.3cm}, column sep=1cm, row sep=0.2cm] {
	        \node [fill=grid-g, label=left:S] (g) {g};\&
	        \node [fill=grid-b] (b) {b};\&
	        \&
	        \node [fill=grid-o] (o) {o};\&
	        \\
	        \&
	        \&
	        \node [fill=grid-p] (p) {p};\&
	        \&
	        \node [fill=grid-G, label=right:G] (G) {G};\\
	        \node [fill=grid-r] (r) {r};\&
	        \node [fill=grid-y] (y) {y};\&
	        \&
	        \node [fill=grid-B] (B) {B};\&
	        \\
	    };
	    \draw [<->] (g) -- (b);
	    \draw [<->] (g) -- (r);
	    \draw [<->] (r) -- (y);
	    \draw [<->] (y) -- (p);
	    \draw [<->] (p) -- (o);
	    \draw [<->] (p) -- (B);
	    \draw [<->] (o) -- (B);
	    \draw [<->] (B) -- (G);
	\end{tikzpicture}
	\caption{A grid world domain (top) and an abstraction (bottom). The colors encode the mapping function, G is the goal.}
	\label{fig:rooms8}
\end{figure}
Thanks to the abstraction, we can inform the agent that exploration should avoid the blue ``room'' (b) and only learn options for moving to and within the other rooms.
The same does not necessarily hold for the orange block (o), instead,
as the optimal path depends on the specific transition probabilities in each arc.

\subsection{Exploiting the Knowledge}
Let us consider a hierarchy of abstractions $\Model_0, \dots, \Model_n$, together with
the functions~$\Mapping_{0}, \dots, \Mapping_{n-1}$. The learning process proceeds
incrementally, training in order from the easiest to the hardest model.
When learning on $\Model_i$, our method exploits the knowledge acquired from its abstraction by applying of a form of Reward Shaping, constructed from the estimated solution for~$\Model_{i+1}$.
In particular, we recognize that the optimal value function $V^*_{i+1}$ of $\Model_{i+1}$ is a
helpful heuristic that can be used to evaluate how desirable a group of states is according to the abstraction.
We formalize our application of RS in the following definition.
\begin{definition}
	Let $\Model_i$ be an MDP and $\langle \Model_{i+1}, \Mapping_i \rangle$ its abstraction.
	We define the \emph{biased~MDP} of $\Model_i$ with respect to $\langle \Model_{i+1}, \Mapping_i \rangle$
	as the model $\Biased\Model_i$, resulting
	from the application of reward shaping to $\Model_i$, using the potential:
	\begin{equation}
		\Potential(s) \coloneqq V_{i+1}^*(\Mapping_i(s))
		\label{eq:shaping-potential}
	\end{equation}
	where, $V_{i+1}^*$ is the optimal value function of~$\Model_{i+1}$.
	\label{def:biased-mdp}
\end{definition}

This choice is a novel contribution of this paper and it allows to evaluate each state
according to how much desirable the corresponding abstract state is, according to the optimal policy for~$\Model_{i+1}$.
This is beneficial, as high potentials are associated to high Q-function value initializations~\cite{wiewiora2003potential}.
In fact, \cite{ng1999policy} was the first to notice that the MDP own optimal value function is a very natural potential for RS.
We extend this idea, by using the value function of the abstract MDP instead.

\section{Reward Shaping for Episodic RL}
\label{sec:shaping-episodes}

Potential-Based RS has been explicitly designed not to alter the optimal
policies. In fact, regardless of the potential, in case of an infinite horizon, or
if the episodes always terminate in a zero-potential absorbing state, this is always
guaranteed~\cite{ng1999policy}.  However, in RL, it is extremely common to diversify the agent experiences by
breaking up exploration in episodes of finite length. Thus, in the episodic
setting, these guarantees do not hold anymore, as the episodes might terminate
in states with arbitrary potential and the optimal policy can be altered~\cite{grzes2017reward}.

To see this, consider an episode $\Trace = s_0 a_1 r_1 s_1 \dots s_n$ of an
MDP~$\Model$, and the associated episode $\Trace' = s_0 a_1 r'_1 s_1 \dots s_n$, where
rewards have been modified via reward shaping. The returns of the two sequences
are related by~\cite{grzes2017reward}:
\begin{align}
	\ReturnSym(\Trace') \coloneqq \sum_{t=0}^{n-1} \gamma^t\, r'_{t+1} = \ReturnSym(\Trace) + \gamma^n\, \Potential(s_n) - \Potential(s_0)
	\label{eq:episode-return}
\end{align}
The term $\gamma^n\, \Potential(s_n)$ is the one responsible of modifying the optimal policies, as it depends on the state that is reached at the end of the episode.
So, the solution proposed by~\cite{grzes2017reward} is to assume, for every terminal state, the null potential
${\Potential(s_n) = 0}$, as this would preserve the original returns.

However, this is not always the only desirable solution.
In fact, we might be interested in relaxing the guarantee of an identical policy, in favour of a stronger impact on learning speed.
The same need has been also identified by~\cite{schubert_2021_PlanbasedRelaxed} and addressed with different solutions.
As an example, let us consider an MDP with a null reward function everywhere, except when transitioning to a distinct goal state.
As a consequence of equation~\eqref{eq:episode-return} and the choice $\Potential(s_n) = 0$, all finite trajectories which do not contain the goal state are associated to the same return.
Since the agent cannot estimate its distance to the goal through differences in return, return-invariant RS of~\cite{grzes2017reward} does not provide a \emph{persistent} exploration bias to the agent.
The form of RS adopted in this paper, which is formulated in Definition~\ref{def:biased-mdp}, does not assign null potentials to terminal states.
Therefore, we say that it is \emph{not} return-invariant.
This explains why the MDP of Definition~\ref{def:biased-mdp} has been called \mbox{``biased''}:
optimal policies of $\Biased{\Model}_i$ and $\Model_i$ do not necessarily correspond.
This is addressed in the next section, where we show that the complete procedure recovers optimal convergence.

\subsection{The Algorithm}
\label{sec:algorithm}

Since we deliberately adopt a form of RS which is not return invariant in the episodic setting, we devised a technique to recover optimality.
The present section illustrates the proposed method and proves that it converges to the optimal policy of the original MDP, when coupled with any off-policy algorithm.

\begin{algorithm}[tb]
	\caption{Main algorithm}
	\label{alg:main}
	\KwIn{Off-policy learning algorithm $\RlAlgo$}
	\KwIn{$\Model_0, \dots, \Model_{n},\,\Mapping_0, \dots, \Mapping_{n-1}$}
	\KwOut{$\Est{\Policy}^*_0$, ground MDP estimated optimum}
	\BlankLine
	$\Est{V}^*_{n+1}: s \mapsto 0$\;
	$\Mapping_n: s \mapsto s$\;
	\BlankLine
	\ForEach{$i \in \{n, \dots, 0\}$}{
	    $F_i \gets \FMakeShaping(\gamma_i, \Mapping_i, \Est{V}^*_{i+1})$\nllabel{alg:main:shaping}\;
	    $\DLearner_i \gets \RlAlgo(\Model_i)$\;
	    $\Biased{\DLearner}_i \gets \RlAlgo(\Model_i)$\;
	    \While{not $\DLearner_i.\FStopCondition()$}{
	        $s \gets \Model_i.\FState()$\;
	        $a \gets \Biased{\DLearner}_i.\FAction(s)$\nllabel{alg:main:action}\;
	        $r, s' \gets \Model_i.\FAct(a)$\;
	        $r^b \gets r + F_i(s, a, s')$\;
	        $\Biased{\DLearner}_i.\FUpdate(s, a, r^b, s')$\;
	        $\DLearner_i.\FUpdate(s, a, r, s')$\;
	    }
	    $\Est{\Policy}^*_i \gets \DLearner_i.\FOutput()$\;
	    $\Est{V}^*_i \gets \FComputeValue(\Est{\Policy}^*_i, \Model_i)$\;
	}
\end{algorithm}

The procedure is presented in detail in Algorithm~\ref{alg:main}.
Learning proceeds sequentially, training from the most abstract model to the ground domain.
When learning on the $i$-th MDP, the estimated optimal value function $\Est{V}^*_{i+1}$ of the previous model is used to obtain a reward shaping function (line~\ref{alg:main:shaping}).
This implicitly defines the biased MDP $\Biased{\Model}_i$ of Definition~\ref{def:biased-mdp}.
Experience is collected by sampling actions according to a stochastic exploration policy, as determined by the specific learning algorithm~$\RlAlgo$.
Such policy may be derived from the current optimal policy estimate for $\Biased{\Model}_i$, such as an $\epsilon$-greedy exploration policy in $\BiasedStar{\Est{\Policy}}_i$ (line~\ref{alg:main:action}).
Finally, the output of each learning phase are the estimates $\Est{\Policy}^*_i, \Est{V}^*_i$ for the original MDP~$\Model_i$.
This allows to iterate the process with an unbiased value estimate, or conclude the procedure with the final learning objective~$\Est{\Policy}^*_0$.

\begin{proposition}
    Let us consider MDPs $\Model_0, \dots, \Model_{n}$ and their associated mapping functions $\Mapping_0, \dots, \Mapping_{n-1}$.
    If $\RlAlgo$ is an off-policy learning algorithm, then,
    in every $i$-th iteration of Algorithm~\ref{alg:main}, $\Est{\Policy}^*_i$ converges to $\Policy^*_i$,
    as the number of environment interactions increase.
		\label{prop:optimal-convergence}
\end{proposition}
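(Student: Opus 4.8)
The plan is to exploit the deliberate decoupling, inside each iteration of Algorithm~\ref{alg:main}, between the learner $\Biased{\DLearner}_i$ that \emph{drives exploration} and the learner $\DLearner_i$ whose output is \emph{returned}. First I would fix an arbitrary iteration $i$ and observe that, although actions are selected by $\Biased{\DLearner}_i.\FAction(s)$ (line~\ref{alg:main:action}), the tuples $(s, a, r, s')$ passed to $\DLearner_i.\FUpdate$ carry the \emph{original} reward $r$ returned by $\Model_i.\FAct$ together with a genuine next state $s'$ sampled from the transition function of $\Model_i$. Hence $\DLearner_i$, being the instance $\RlAlgo(\Model_i)$, consumes a legitimate experience stream $(\Actions\Rewards\States)^t$ for the \emph{unbiased} MDP $\Model_i$: the shaping term $F_i$ enters only the biased reward $r^b$ fed to $\Biased{\DLearner}_i$ and never reaches $\DLearner_i$.

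The next step is to identify the exploration policy that produced this stream and to check that it meets the hypothesis of Definition~\ref{def:off-policy-learning}. The action-selection rule of $\Biased{\DLearner}_i$ defines a time-dependent stochastic policy $\ExplorationPolicy: \States_i, \Naturals \to \Distribution{\Actions_i}$. The key step is to argue that $\ExplorationPolicy$ has full support, i.e.\ $\ExplorationPolicy(s, t, a) > 0$ for all $s, t, a$. This is exactly the exploration guarantee that any off-policy instance of $\RlAlgo$ must itself provide in order to converge on its own model: a standard $\epsilon$-greedy rule over $\BiasedStar{\Est{\Policy}}_i$ assigns every action probability at least $\epsilon / \abs{\Actions_i} > 0$ at every time step, irrespective of the current estimates and hence of the shaping potential $\Potential$. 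Since the definition of off-policy learning already quantifies over time-varying full-support policies (through the $\Naturals$ argument), the fact that $\ExplorationPolicy$ keeps changing as $\Biased{\DLearner}_i$ learns causes no difficulty.

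With full support established, the conclusion follows by a direct appeal to Definition~\ref{def:off-policy-learning}: because $\DLearner_i = \RlAlgo(\Model_i)$ is off-policy and the experience it receives is generated by the full-support policy $\ExplorationPolicy$, the learner converges to the optimal policy $\Policy^*_i$ of $\Model_i$ as $t \to \infty$; and since $\Est{\Policy}^*_i \gets \DLearner_i.\FOutput()$, this is precisely the claimed convergence. I would finally remark that the argument is uniform in $i$ and, crucially, uses no property of $\Est{V}^*_{i+1}$ beyond its being a fixed real-valued function, so that no induction on the hierarchy is needed for correctness: the abstract value estimate influences only \emph{which} full-support exploration policy is used, never the target to which the unbiased learner converges. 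This is what makes the method tolerant to modelling errors. The only genuine obstacle is the verification in the second paragraph—that the exploration policy inherited from the biased learner retains full support at every step—after which the statement reduces to the definition of off-policy learning.
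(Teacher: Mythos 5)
Your proposal is correct and follows essentially the same route as the paper's proof: both exploit the decoupling between $\Biased{\DLearner}_i$ (which drives exploration) and $\DLearner_i$ (which receives unshaped rewards), note that the shared state and action spaces make the biased learner's exploration policy a valid exploration policy for $\Model_i$, and conclude by Definition~\ref{def:off-policy-learning}. The only difference is that you make explicit the verification that this exploration policy has full support $\ExplorationPolicy(s,t,a)>0$, a point the paper's proof leaves implicit.
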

\begin{proof}All proofs are in the appendix.\end{proof}

\section{Abstraction Quality}
\label{sec:quality}

Our approach gives great flexibility in selecting an abstraction. 
Still, given some ground MDP, not all models are equally helpful as abstractions.
This section serves to define what properties should good abstractions possess.
As we can see from Algorithm~\ref{alg:main}, they are used to construct effective exploration policies (row~\ref{alg:main:action}).
Therefore, they should induce a biased MDP that assigns higher rewards to regions of the state space from which the optimal policy of the original problem can be easily estimated.
The exploration loss of an abstraction, introduced in Definition~\ref{def:exploration-loss}, captures this idea.

Although we may apply the proposed method in generic MDPs, our analysis focuses on a wide class of tasks that can be captured with goal MDPs.
\begin{definition}
	We say that an MDP $\Model = \langle \States, \Actions, T, R, \gamma \rangle$ is a \emph{goal MDP} iff there exists a set of goal states $\Goals \subseteq \States$ such that:
	\begin{align}
		R(s, a, s') &= 1 \quad \text{if $s \not\in \Goals$ and $s' \in \Goals$, \quad 0 otherwise}\\
		V^*(s) &= 0 \qquad \forall s \in \Goals
		\label{eq:goal-potential}
	\end{align}
	\label{def:goal}
\end{definition}
Equation~\eqref{eq:goal-potential} simply requires that from any goal state, it is not possible to re-enter any other goal and collect an additional reward.
Goal MDPs are very straightforward definitions of tasks, which are also sufficiently general, as we see in the experimental section.
\begin{assumption}
	The ground MDP $\Model_0$ is a goal MDP.
	\label{as:goal}
\end{assumption}
\begin{assumption}
	Given a goal MDP $\Model_i$, with goal states~$\Goals_i$, and its abstraction $\langle \Model_{i+1}, \Mapping_i \rangle$,
	we assume $\Model_{i+1}$ is a goal MDP with $\Goals_{i+1}$ satisfying:
	\begin{equation}
		\Goals_i = \cup_{s \in \Goals_{i+1}}  \; \Mapping^{-1}(s) 
	\end{equation}
	\label{as:abstract-goal}
\end{assumption}
In other words, abstract goals should correspond through the mapping to all and only the goal states in the ground domain.
In the example of Figure~\ref{fig:rooms8}, the gray cells in the ground MDP are mapped to the abstract goal labelled as G.

We start our analysis with two observations.
First, due to how the framework is designed, convergence on any model $\Model_i$, does depend on its abstraction, $\Model_{i+1}$, but not on any other model in the hierarchy.
Therefore, when discussing convergence properties, it suffices to talk about a generic MDP $\Model = \langle \States, \Actions, T, R, \gamma \rangle$ and its direct abstraction, $\Abst{\Model} = \langle \Abst{\States}, \Abst{\Actions}, \Abst{T}, \Abst{R}, \Abst{\gamma} \rangle$.
Let $\Mapping: \States \to \Abst{\States}$ denote the relevant mapping.
Second, while a goal MDP $\Model$ has sparse rewards, the reward function of the biased MDP $\Biased\Model$ is no longer sparse.
Depending on the abstraction $\langle \Abst\Model, \Mapping \rangle$, from which it is defined, the rewards of $\Biased\Model$ can be as dense as needed.
As confirmed empirically, this allows to achieve a faster convergence on the biased MDP.
However, its optimum $\BiasedStar\Policy$ should also be a good exploration policy for the original domain.
Therefore, we measure how similar $\BiasedStar{\Policy}$, the optimal policy for the biased MDP~$\Biased\Model$, is with respect to some optimal policy $\Policy^*$ of~$\Model$.

\begin{definition}
	Given an MDP $\Model$, the \emph{exploration loss} of an abstraction $\langle \Abst\Model, \Mapping \rangle$ is the expected value loss of executing in~$\Model$ the optimal policy of $\Biased\Model$, $\BiasedStar\Policy$, instead of its optimal policy:
	\begin{equation}
		L(\Model, \langle \Abst\Model, \Mapping \rangle) \coloneqq \max_{s \in \States}\,
		\abs{V^*(s) - Q(s, \BiasedStar\Policy)}
	\end{equation}
	\label{def:exploration-loss}
\end{definition}

In order to limit this quantity, we relate abstract states $\Abst{s} \in \Abst{\States}$ to sets of states $\Mapping^{-1}(\Abst{s}) \subseteq \States$ in the ground MDP.
Similarly, actions $\Abst{a} \in \Abst\Actions$ correspond to non-interruptible policies that only terminate when leaving the current block.
So, a more appropriate correspondence can be identified between abstract actions and $\Mapping$-relative {options} in~$\Model$.
We start by deriving, in equation~\eqref{eq:optq-multistep}, the multi-step value of a $\Mapping$-relative option in goal MDPs.

\paragraph{Multi-step value of options}

By combining the classic multi-step return of options~\cite{sutton1999between},
$\Mapping$-relative options from~\cite{abel_2020_ValuePreserving} and goal
MDPs of Definition~\ref{def:goal}, we obtain:
\begin{lemma}
    Given a goal MDP $\Model$ and $\Mapping: \States \to \Abst\States$,
    for any $s \in \States$ and $\Mapping$-relative option~$o$, the optimal value of~$o$ is:
    \begin{equation}
        \begin{aligned}
    	Q^*(s, o) &=
        \sum_{k=0}^{\infty} \gamma^k\, \sum_{s_{1:k} \in \Mapping(s)^k} \sum_{s' \not\in \Mapping(s)} \bigl(\\
    	&p(s_{1:k} s' \Given s, \Policy_o)\, \bigl( \Indicator(s' \in G) + \gamma\, V^*(s') \bigr)\bigr)
    	\end{aligned}
    	\label{eq:optq-multistep}
    \end{equation}
		\label{lem:multi-step}
\end{lemma}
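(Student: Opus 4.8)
The plan is to expand the standard multi-step option value and then specialise it using the two structural ingredients of the statement: the definition of a $\Mapping$-relative option, which forces termination exactly upon leaving the block $\Mapping(s)$, and the sparse reward structure of a goal MDP. By the definition of $Q^*(s,o)$ as the expected return obtained by running $\Policy_o$ until termination and then following the optimal policy, I would first write
\[
Q^*(s,o) = \Expected\Bigl[\,\sum_{t=0}^{\tau-1}\gamma^t r_{t+1} + \gamma^{\tau} V^*(s_\tau) \Bigm| s_0 = s,\; \Policy_o \Bigr],
\]
where $\tau$ is the (random) first time step at which the agent occupies a state outside the block $\Mapping(s)$. Because $o$ is $\Mapping$-relative, $\beta_o(s') = \Indicator(s' \notin \Mapping(s))$, so $\tau$ is exactly this exit time and $\Policy_o$ is only ever queried inside the block, making the expression well defined.

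Next I would decompose the expectation by conditioning on $\tau$. Writing $\tau = k+1$ for $k \ge 0$ corresponds to a trajectory $s = s_0, s_1, \dots, s_k \in \Mapping(s)$ that stays inside the block for $k$ further steps and then exits to $s_{k+1} = s' \notin \Mapping(s)$. The probability of such a trajectory under $\Policy_o$ is exactly $p(s_{1:k}\,s' \mid s, \Policy_o)$, and summing over all interior paths $s_{1:k}\in\Mapping(s)^k$ and all exit states $s'\notin\Mapping(s)$, for each $k\ge 0$, enumerates every terminating trajectory once; its contribution to the return is $\sum_{t=0}^{k}\gamma^t r_{t+1} + \gamma^{k+1} V^*(s')$.

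The key simplification then comes from the goal MDP reward function of Definition~\ref{def:goal}. Under Assumption~\ref{as:abstract-goal} every block of the partition induced by $\Mapping$ is either entirely contained in the goal set or disjoint from it; for a goal block the starting state is itself a goal, so $V^*(s)=0$ and the value is trivial, hence I restrict attention to the non-goal case. When $\Mapping(s)\cap\Goals=\emptyset$, each interior transition $s_t \to s_{t+1}$ has a non-goal destination and therefore yields reward $0$, while the exit transition $s_k \to s'$ has a non-goal source $s_k$ and so yields reward $\Indicator(s'\in\Goals)$. The per-trajectory contribution collapses to $\gamma^{k}\bigl(\Indicator(s'\in\Goals)+\gamma V^*(s')\bigr)$, and pulling $\gamma^k$ out of the inner sums reproduces precisely equation~\eqref{eq:optq-multistep}.

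The main obstacle is the bookkeeping that justifies the vanishing of the interior rewards: this is exactly where the goal MDP structure and the goal-aligned partition of Assumption~\ref{as:abstract-goal} are needed, since without them a block could straddle the goal boundary and intermediate reward terms would survive. The only remaining technical points are the convergence of the infinite series and the legitimacy of interchanging expectation and summation — including that trajectories which never leave the block contribute nothing, because their interior rewards are zero and $\gamma^{k}\to 0$ — all of which follow routinely from $0<\gamma<1$ together with the boundedness of the rewards and of the value function.
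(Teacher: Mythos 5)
Your proposal is correct and follows essentially the same route as the paper: both decompose $Q^*(s,o)$ by the exit time from the block $\Mapping(s)$ and use the goal-MDP reward structure to make the interior rewards vanish, the only cosmetic difference being that you condition directly on the stopping time $\tau$ while the paper unrolls the one-step Bellman recursion for the option. If anything, you are more explicit than the paper about the point where Assumption~\ref{as:abstract-goal} (goal-aligned blocks) is needed to kill the interior reward terms, a step the paper's proof performs silently.
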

This expression sums over any sequence of states ${s_1\dots s_k}$ remaining within $\Mapping(s)$ and leaving the block after $k$~steps at~$s'$.
A similar result was derived by \cite{abel_2020_ValuePreserving} for a slightly different definition for goal MDPs.
However, equation~\eqref{eq:optq-multistep} is not an expression about abstract states, yet, because it depends on the specific ground state~$s'$ that is reached at the end of the option.
Therefore, in the following definition, we introduce a parameter~$\PHomogeneity$ that quantifies how much the reachable states~$s'$ in each block are dissimilar in value.
This allows to jointly talk about the value of each group of states as a whole.
We define a function $W_\PHomogeneity: \Abst\States \times \Abst\States \to \Reals$, that, given a pair of abstract states $\Abst{s}, \Abst{s}'$, predicts, with $\PHomogeneity$-approximation error, the value of any successor ground state $s' \in \Abst{s}'$ that can be reached from some~$s \in \Abst{s}$.

\begin{definition}
	Consider an MDP~$\Model$ and an abstraction $\langle \Abst\Model, \Mapping \rangle$.
	We define the \emph{abstract value approximation} as the smallest $\PHomogeneity \ge 0$ such that
	there exists a function $W_{\PHomogeneity}: {\Abst\States \times \Abst\States \to \Reals}$
	which, for all $\Abst{s}, \Abst{s}' \in \Abst\States$, satisfies:
	\begin{multline}
		\forall s \in \Abst{s},\, \forall s' \in \Abst{s}',\, \forall a \in \Actions\\
		T(s, a, s') > 0 \implies  \abs{W_\PHomogeneity(\Abst{s}, \Abst{s}') - V^*(s')} \le \PHomogeneity
		\label{eq:equipotentials}
	\end{multline}
	\label{def:equipotentials}
\end{definition}
According to this definition, the frontier separating any two sets of states in the partition induced by $\Mapping$
must lie in ground states that can be approximated with the same optimal value, with a maximum error~$\PHomogeneity$.
Thus, any small $\PHomogeneity$ puts a constraint on the mapping function~$\Mapping$.
In the example of Figure~\ref{fig:rooms8}, each room is connected to each other though a single location, so this condition is simply satisfied for $\PHomogeneity = 0$.
However, this definition allows to apply our results in the general case,
provided that the ground states between two neighboring regions can be approximated to be equally close to the goal, with a maximum error of~$\PHomogeneity$.

Thanks to Definition~\ref{def:equipotentials}, it is possible to bound the value of options, only taking future abstract states into consideration (Lemma~\ref{lem:marginalized-optq}).
For this purpose, when starting from some \mbox{$s \in \States$},
we use $p(\FromBlockTo{s}{k}{\Abst{s}'} \Given s)$ to denote the probability of the event of remaining for~$k$ steps in the same block as~$s$,
then reaching any \mbox{$s' \in \Abst{s}'$} at the next transition.
\begin{lemma}
	Let $\Model$ be an MDP and $\langle \Abst{\Model}, \Mapping \rangle$ its abstraction, satisfying assumptions~\ref{as:goal} and \ref{as:abstract-goal}.
	The value of any $\Mapping$-relative option~$o$ in~$\Model$ admits the following lower bound:
	\begin{multline}
		Q^*(s, o) \ge
			\sum_{\Abst{s}' \in \Abst\States \setminus \{\Mapping(s)\}} \sum_{k=0}^{\infty}
					\gamma^k\, p(\FromBlockTo{s}{k}{\Abst{s}'} \Given s, \Policy_{o})\,
					\bigl(\\
			\Indicator(\Abst{s}' \in \Abst{\Goals}) + \gamma\, (W_\PHomogeneity(\Mapping(s), \Abst{s}') - \PHomogeneity) \bigr)
	\end{multline}
	at any $s \in \States$, where,~$\PHomogeneity$ and~$W_\PHomogeneity$\, follow Definition~\ref{def:equipotentials}.
	\label{lem:marginalized-optq}
\end{lemma}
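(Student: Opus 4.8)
The plan is to start from the exact multi-step value of Lemma~\ref{lem:multi-step} and \emph{coarsen} its inner sum over terminal ground states into a sum over abstract blocks, then apply Definition~\ref{def:equipotentials} to replace each terminal value $V^*(s')$ by its block-level approximation. Concretely, in equation~\eqref{eq:optq-multistep} I would regroup the terminal states $s' \not\in \Mapping(s)$ according to the block $\Abst{s}' = \Mapping(s')$ to which they belong, rewriting $\sum_{s' \not\in \Mapping(s)}$ as $\sum_{\Abst{s}' \ne \Mapping(s)} \sum_{s' \in \Abst{s}'}$. This is a genuine partition of $\States \setminus \Mapping(s)$, since a $\Mapping$-relative option terminates exactly upon leaving its block, so $s' \not\in \Mapping(s)$ is equivalent to $\Abst{s}' \ne \Mapping(s)$. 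By definition of the block-reaching probability, $p(\FromBlockTo{s}{k}{\Abst{s}'} \Given s, \Policy_o) = \sum_{s_{1:k} \in \Mapping(s)^k} \sum_{s' \in \Abst{s}'} p(s_{1:k} s' \Given s, \Policy_o)$, so this marginal is exactly what the collapsed inner double sum produces once the summand no longer depends on the specific $s'$.

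The two quantities still depending on $s'$ inside the summand are $\Indicator(s' \in G)$ and $V^*(s')$, and each is neutralized by one assumption. For the goal indicator, Assumption~\ref{as:abstract-goal} gives $\Goals = \cup_{\Abst{s} \in \Abst{\Goals}} \Mapping^{-1}(\Abst{s})$, so membership in $\Goals$ is determined by the block alone: $\Indicator(s' \in G) = \Indicator(\Abst{s}' \in \Abst{\Goals})$ uniformly over $s' \in \Abst{s}'$. For the optimal value, I would invoke Definition~\ref{def:equipotentials}: every terminal $s'$ contributing with positive probability is reached by a boundary-crossing transition $T(s'', a, s') > 0$ from some $s'' \in \Mapping(s)$, so the defining inequality licenses $V^*(s') \ge W_\PHomogeneity(\Mapping(s), \Abst{s}') - \PHomogeneity$, a bound constant across $s' \in \Abst{s}'$. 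Substituting both facts yields $\Indicator(s' \in G) + \gamma\, V^*(s') \ge \Indicator(\Abst{s}' \in \Abst{\Goals}) + \gamma\,(W_\PHomogeneity(\Mapping(s), \Abst{s}') - \PHomogeneity)$, whose right-hand side is independent of $s'$; it can therefore be pulled out of the sums over $s' \in \Abst{s}'$ and over the intermediate trajectories $s_{1:k}$, collapsing them into $p(\FromBlockTo{s}{k}{\Abst{s}'} \Given s, \Policy_o)$. Reassembling the outer sums over $k$ and $\Abst{s}'$ then gives exactly the claimed lower bound.

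I expect the delicate step to be the justification that Definition~\ref{def:equipotentials} is applicable to every terminal state appearing in the expansion: one must argue that each such $s'$ is genuinely the image of a one-step transition originating \emph{inside} the block $\Mapping(s)$, so that the premise $T(s'', a, s') > 0$ with $s'' \in \Mapping(s)$ holds and the approximation bound is warranted. This is precisely where the $\Mapping$-relative structure of $o$ is essential: since the option confines all intermediate states $s_{1:k}$ to $\Mapping(s)$ and terminates only on exit, the crossing into $s'$ is made from a state in $\Mapping(s)$ (taken to be $s$ itself when $k=0$), which is exactly the configuration Definition~\ref{def:equipotentials} quantifies over, and it fixes the first argument of $W_\PHomogeneity$ to be the starting block $\Mapping(s)$. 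The remainder is bookkeeping: because we are lower-bounding, care is needed to track the sign of the $-\PHomogeneity$ slack and to use the nonnegativity of the discounted probabilities $\gamma^k\, p(\cdot)$ so that the termwise inequality survives summation.
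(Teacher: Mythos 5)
Your proposal is correct and follows essentially the same route as the paper's proof: start from the exact expression of Lemma~\ref{lem:multi-step}, use the $\Mapping$-relative structure to argue that each terminal $s'$ is reached by a one-step transition out of $\Mapping(s)$ so that Definition~\ref{def:equipotentials} applies and yields $V^*(s') \ge W_\PHomogeneity(\Mapping(s), \Mapping(s')) - \PHomogeneity$, handle the indicator via Assumption~\ref{as:abstract-goal}, and then collapse the sums over trajectories and over $s'$ within each block into $p(\FromBlockTo{s}{k}{\Abst{s}'} \Given s, \Policy_o)$. The only cosmetic difference is the order of the marginalization steps (the paper first marginalizes over trajectories to a per-ground-state form and then groups by block, whereas you group by block first), which does not affect the argument.
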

This lemma provides a different characterization of options, in terms of abstract states, so that it can be exploited to obtain Theorem~\ref{th:exploration-loss-bound}.

\paragraph{Exploration loss of abstractions}

Thanks to the results of the previous section, we can now provide a bound for the exploration loss of Definition~\ref{def:exploration-loss}, for any abstraction.
We expand the results from~\cite{abel_2020_ValuePreserving} to limit this quantity.

First, we observe that any policy can be regarded as a finite set of $\Mapping$-relative options whose initiation sets are partitioned according to~$\Mapping$.
Then, from Lemma~\ref{lem:marginalized-optq}, we know that an approximation for the value of options only depends on the $k$-step transition probability to each abstract state.
So, we assume this quantity is bounded by some~$\POptAccurate$:
\begin{definition}
    Given an MDP $\Model$, a function $\Mapping: \States \to \Abst\States$ and two policies $\Policy_1$, $\Policy_2$,
    we say that $\Policy_1$ and $\Policy_2$ have \emph{abstract similarity} $\POptAccurate$ if
	\begin{multline}
		\forall s \in \States,\quad \forall \Abst{s}' \in \Abst{\States} \setminus \{\Mapping(s)\}, \quad
		\forall k \in \Naturals \\
		\abs{\,
			p(\FromBlockTo{s}{k}{\Abst{s}'} \Given s, \Policy_1) -
			p(\FromBlockTo{s}{k}{\Abst{s}'} \Given s, \Policy_2)\,
		} \le \POptAccurate
	\end{multline}
	\label{def:similar-policies}
\end{definition}
Intuitively, abstract similarity measures the difference between the two abstract actions described by each policy,
as it only depends on the probability of the next abstract state that is reached, regardless of the single trajectories and the specific final ground state.
In the running example of Figure~\ref{fig:rooms8},
two policies with low~$\POptAccurate$, after the same number of steps, would reach the same adjacent room with similar probability.
It is now finally possible to state our result.
\begin{theorem}
	Let $\Model$ and $\langle \Abst\Model, \Mapping \rangle$ be and MDP and an abstraction satisfying assumptions~\ref{as:goal} and \ref{as:abstract-goal},
	and let $\Biased\Model$ be the biased MDP.
	If $\epsilon$ is the abstract similarity of $\Policy^*$ and $\BiasedStar{\Policy}$, and the abstract value approximation is~$\PHomogeneity$, then, the exploration loss of $\langle \Abst\Model, \Mapping \rangle$ satisfies:
	\begin{equation}
		L(\Model, \langle \Abst\Model, \Mapping \rangle) \le
			\frac{2 \abs{\Abst\States} (\POptAccurate + \gamma\, \PHomogeneity)}{(1-\gamma)^2}
		\label{eq:exploration-loss-bound}
	\end{equation}
	\label{th:exploration-loss-bound}
\end{theorem}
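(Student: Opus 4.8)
The plan is to control the pointwise gap $\abs{V^*(s) - Q(s,\BiasedStar\Policy)}$, where $Q(s,\BiasedStar\Policy)=V^{\BiasedStar\Policy}(s)$ is the value of running $\BiasedStar\Policy$ in $\Model$, by regarding every policy as a collection of $\Mapping$-relative options---one per block of the partition induced by $\Mapping$---and then comparing, block by block, the options chosen by $\Policy^*$ and by $\BiasedStar\Policy$. First I would record the two-sided form of Lemma~\ref{lem:marginalized-optq}: the derivation giving the lower bound with $W_\PHomogeneity-\PHomogeneity$ equally gives the matching upper bound with $W_\PHomogeneity+\PHomogeneity$, so for any $\Mapping$-relative option $o$ with in-block policy $\Policy_o$ one can write $Q^*(s,o)=A(s,\Policy_o)+\delta$, where $A(s,\Policy_o)\coloneqq\sum_{\Abst{s}'}\sum_{k}\gamma^k\,p(\FromBlockTo{s}{k}{\Abst{s}'}\mid s,\Policy_o)\,(\Indicator(\Abst{s}'\in\Abst{\Goals})+\gamma\,W_\PHomogeneity(\Mapping(s),\Abst{s}'))$ is a purely abstract quantity and $\abs{\delta}\le\gamma\PHomogeneity\sum_{\Abst{s}',k}\gamma^k p(\cdots)\le\gamma\PHomogeneity/(1-\gamma)$.

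Next I would bound the per-block suboptimality $b\coloneqq\max_{s}\abs{V^*(s)-Q^*(s,o^b)}$, where $o^b$ is the option that $\BiasedStar\Policy$ induces on the block $\Mapping(s)$. Writing $V^*(s)=Q^*(s,o^*)$ for the corresponding option of $\Policy^*$ and subtracting the two abstract approximations, the gap splits into the approximation slacks, bounded by $2\gamma\PHomogeneity/(1-\gamma)$, and the term $\abs{A(s,\Policy_{o^*})-A(s,\Policy_{o^b})}$, which differs only through the block-exit probabilities. Here Definition~\ref{def:similar-policies} applies directly, since these probabilities depend only on within-block behaviour and hence the abstract similarity $\POptAccurate$ of $\Policy^*$ and $\BiasedStar\Policy$ transfers to the two option policies: each probability differs by at most $\POptAccurate$, the sum over $\Abst{s}'$ contributes a factor $\abs{\Abst\States}$, the geometric sum over $k$ contributes $1/(1-\gamma)$, and the factor $\abs{\Indicator(\Abst{s}'\in\Abst{\Goals})+\gamma W_\PHomogeneity}$ is at most $2$, because optimal values of a goal MDP lie in $[0,1]$ and $W_\PHomogeneity$ may be taken clipped to that range without increasing its error. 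Collecting terms, and using $\abs{\Abst\States}\ge 1$ to absorb the $\PHomogeneity$-slack under the $\abs{\Abst\States}$ factor, gives $b\le 2\abs{\Abst\States}(\POptAccurate+\gamma\PHomogeneity)/(1-\gamma)$.

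Finally I would lift this per-block bound to a global one by a contraction argument. Using $V^*(s)-V^{\BiasedStar\Policy}(s)=[V^*(s)-Q^*(s,o^b)]+[Q^*(s,o^b)-V^{\BiasedStar\Policy}(s)]$, the first bracket is at most $b$, while the second is a pure continuation gap: both terms run the same option $o^b$ to the block boundary and then follow, respectively, the optimal policy and $\BiasedStar\Policy$, so it equals $\sum_{\Abst{s}',k}\gamma^{k+1}p(\FromBlockTo{s}{k}{\Abst{s}'}\mid s,\Policy_{o^b})\,\Expected[\Delta(s')]$ with $\Delta\coloneqq V^*-V^{\BiasedStar\Policy}$, whose total weight is the effective per-option discount and is at most $\gamma$. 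Taking the supremum over $s$ yields $\norm{\Delta}_\infty\le b+\gamma\norm{\Delta}_\infty$, hence $\norm{\Delta}_\infty\le b/(1-\gamma)\le 2\abs{\Abst\States}(\POptAccurate+\gamma\PHomogeneity)/(1-\gamma)^2$, which is exactly the claimed bound on the exploration loss of Definition~\ref{def:exploration-loss}.

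The step I expect to be the main obstacle is this contraction argument. I must verify that the option decomposition of an arbitrary policy is faithful---that the within-block exit distribution under the full policy coincides with that under the induced option policy, so the abstract-similarity hypothesis genuinely transfers---and that the effective per-option discount $\sum_{\Abst{s}',k}\gamma^{k+1}p(\FromBlockTo{s}{k}{\Abst{s}'}\mid s,\Policy_{o^b})$ is uniformly at most $\gamma$, so that the geometric series closes and produces precisely the second $1/(1-\gamma)$ factor. Ensuring that every per-block estimate is uniform in $s$ is likewise needed for the supremum step to be legitimate.
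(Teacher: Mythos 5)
Your proposal is correct and follows essentially the same route as the paper: represent both policies as families of $\Mapping$-relative options, apply the two-sided version of Lemma~\ref{lem:marginalized-optq} together with the abstract-similarity bound and the clipped $W_\PHomogeneity$ to obtain the per-option loss $2\abs{\Abst\States}(\POptAccurate+\gamma\,\PHomogeneity)/(1-\gamma)$, and then pay one more factor $1/(1-\gamma)$ to pass from the per-option to the global loss. The only difference is in that last step: the paper gets the extra factor by citing equation~(3) of \cite{abel_2020_ValuePreserving}, whereas you prove it directly via the contraction $\norm{\Delta}_\infty\le b+\gamma\norm{\Delta}_\infty$, which is sound (the exit time of an option is at least one step, so the continuation weight is at most $\gamma$) and makes the argument self-contained.
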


This theorem shows under which conditions an abstraction induces an exploration policy that is similar to some optimal policy of the original domain.
However, we recall that optimal convergence is guaranteed regardless of the abstraction quality, because the stochastic exploration policy satisfies the mild conditions posed by the off-policy learning algorithm adopted.

Apart from our application to the biased MDP policy, the theorem has also a more general impact.
The result shows that, provided that the abstraction induces a partition of states whose frontiers have some homogeneity in value (Definition~\ref{def:equipotentials}), it is possible to reason in terms of abstract transitions.
Only for a $\PHomogeneity=0$, this bound has similarities with the inequality n.~5\ in
\cite{abel_2020_ValuePreserving}. Notice however, that the one stated here is expressed
in terms of the size of the abstract state space, which usually can
be assumed to be $\abs{\Abst\States} \ll \abs{\States}$.

\section{Validation}
\label{sec:validation}

We initially consider a navigation scenario, where some locations in a map are selected as goal states.

\paragraph{Environments}
We start with two levels of abstractions, $\Model_1$ and~$\Model_2$.
The ground MDP $\Model_1$ consists of a finite state space $\States_1$,
containing a set of locations, and a finite set of actions $\Actions_1$ that
allows to move between neighboring states, with some small failure probability.
Following the idea of Figure~\ref{fig:rooms8}, we also define an abstract MDP~$\Model_2$,
whose states correspond to contiguous regions of~$\States_1$.
Actions~$\Actions_2$ allow to move, with high probability, from any region to any other, only if there is a direct connection in~$\Model_1$.
We instantiate this idea in two domains. In the first, we consider a map as the one in the classic 4-rooms environment from \cite{sutton1999between}.
The second, is the ``8-rooms'' environment shown in Figure~\ref{fig:rooms8}.%
\footnote{Code available at {https://github.com/cipollone/multinav2}}

\paragraph{Training results}

In the plots of Figures~\ref{fig:4rooms-steps} and \ref{fig:8rooms-steps}, for each of the two ground MDPs, we compare the performance of the following algorithms:
\begin{description}
	\item[{\tikz [baseline=-0.5ex] \draw [no-rs, thick] (0,0) -- +(1.5em,0);}]
		Q-learning~\cite{watkins1992q};
	\item[{\tikz [baseline=-0.5ex] \draw [delayedq-rs, thick] (0,0) -- +(1.5em,0);}]
		Delayed Q-learning~\cite{strehl2006pac};
	\item[{\tikz [baseline=-0.5ex] \draw [our-rs, thick] (0,0) -- +(1.5em,0);}]
		Algorithm~\ref{alg:main} (our approach) with Q-learning.
\end{description}

\begin{figure}
	\centering
	\subfloat[][Navigation task in the 4-rooms domain.\label{fig:4rooms-steps}]%
	{\includegraphics{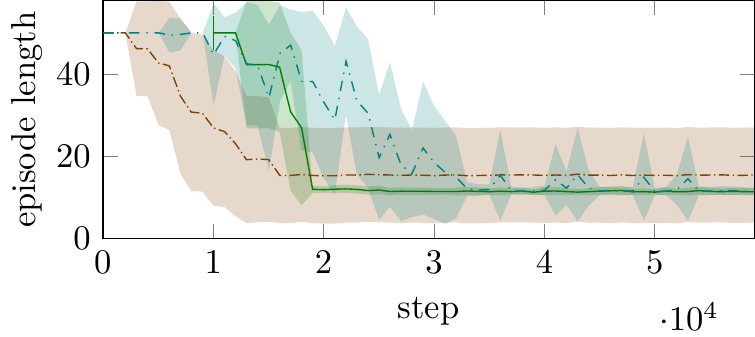}}\\
	\subfloat[][Navigation task in the 8-rooms domain.\label{fig:8rooms-steps}]%
	{\includegraphics{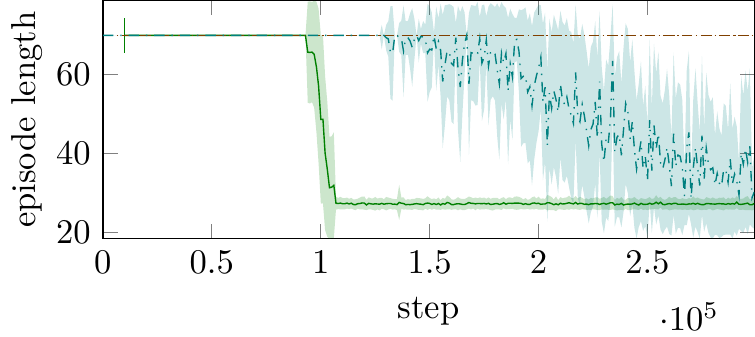}}
	\caption{Results on the navigation tasks.}
\end{figure}

Each episode is terminated after a fixed timeout or when the agent reaches a goal state.
Therefore, lower episode lengths are associated to higher cumulative returns.
Horizontal axis spans the number of sampled transitions.
Each point in these plots shows the average and standard deviation of the evaluations of 10 different runs.
The solid green line of our approach is shifted to the right, so to account for the number of time steps that were spent in training the abstraction.
Further training details can be found in the appendix.
As we can see from Figure~\ref{fig:4rooms-steps}, all algorithms converge relatively easily in the smaller 4-rooms domain.
In Figure~\ref{fig:8rooms-steps}, as the state space increases and it becomes harder to explore, a naive exploration policy does not allow Q-learning to converge in reasonable time.
Our agent, on the other hand, steadily converge to optimum, even faster than Delayed Q-learning which has polynomial time guarantees.

\subsection{Return-Invariant Shaping}

As discussed in Section~\ref{sec:shaping-episodes}, when applying RS in the episodic setting,
there is a technical but delicate distinction to make between:
\begin{description}
	\item[{\tikz [baseline=-0.5ex] \draw [inv-rs, thick] (0,0) -- +(1.5em,0);}]
		Return-invariant RS (null potentials at terminal states);
	\item[{\tikz [baseline=-0.5ex] \draw [our-rs, thick] (0,0) -- +(1.5em,0);}]
		Non return-invariant RS (our approach).
\end{description}
In Figure~\ref{fig:8rooms-inv} (top), we compare the two variants on the 8-rooms domain.
\begin{figure}
	\centering
	{\includegraphics{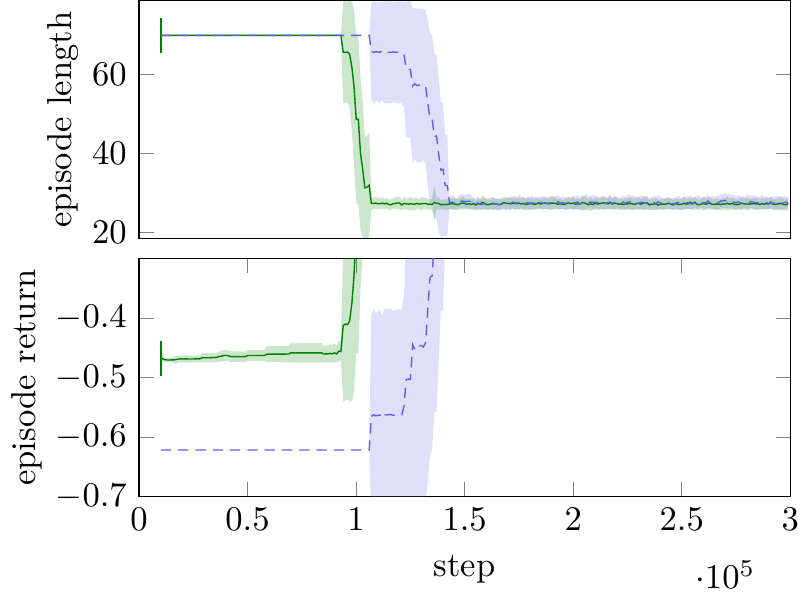}}
	\caption{Return-invariant RS and our approach.}
	\label{fig:8rooms-inv}
\end{figure}
Although both agents receive RS from the same potential,
this minor modification suffices to produce this noticeable difference.
The reason lies in the returns the two agents observe (bottom).
Although they are incomparable in magnitude, in the early learning phase,
we see that only our reward shaping is able to reward each episode differently,
depending on their estimated distance to the goal.

\subsection{Robustness to Modelling Errors}

We also considered the effect of significant modelling errors in the abstraction.
In Figure~\ref{fig:reach-abs-errors}, we report the performance of our agent on the 8-rooms domain, when driven by three different abstractions:
\begin{figure}
\centering
	\includegraphics{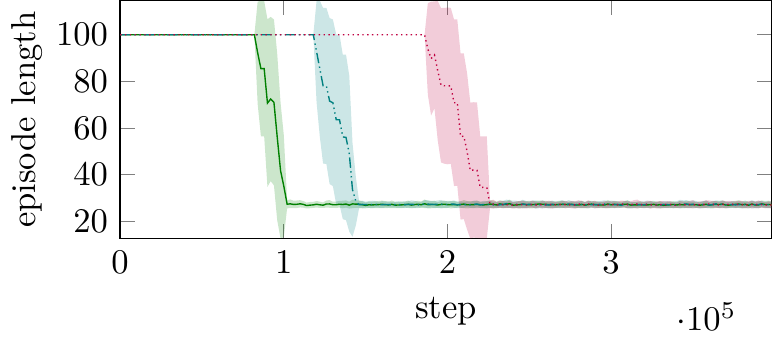}
	\caption{Training in presence of errors.}
    \label{fig:reach-abs-errors}
\end{figure}
\begin{description}
\item[{\tikz [baseline=-0.5ex] \draw [our-rs, thick] (0,0) -- +(1.5em,0);}]
	$\Model_2$: is the same abstraction used in Figure~\ref{fig:8rooms-steps};
\item[{\tikz [baseline=-0.5ex] \draw [bad1-rs, thick] (0,0) -- +(1.5em,0);}]
	$\Model_2^{(b)}$: is $\Model_2$ with an additional transition from the pink states (p) to the goal (G),
	not achievable in $\Model_1$.
\item[{\tikz [baseline=-0.5ex] \draw [bad2-rs, thick] (0,0) -- +(1.5em,0);}]
	$\Model_2^{(c)}$: is $\Model_2^{(b)}$ with an additional
	transition from the blue (b) to the pink region (p),
	not achievable in $\Model_1$.
\end{description}
Clearly, abstractions with bigger differences with respect to the underlying domain cause the learning process to slow down.
However, with any of these, Q-learning converges to the desired policy and the performance degrades gracefully.
Interestingly, even in presence of severe modelling errors, the abstraction still provides useful information with respect to uninformed exploration.

\subsection{Interaction Task}

In this section, we demonstrate that the proposed method applies to a wide range of algorithms, dynamics and tasks.
With respect to variability in tasks, we emphasize that goal MDPs can capture many interesting problems.
For this purpose, instead of reaching a location, we consider a complex temporally-extended behavior such as:
``reach the entrance of the two rooms in sequence and, if each door is open, enter and interact with the person inside, if present''.
This task is summarized by the deterministic automaton~$\Automa$ of Figure~\ref{fig:office-automa}.
Note that there is a single accepting state, and arcs are associated to environment events.
\begin{figure}
    \centering
		\begin{tikzpicture}[
				state/.style={state without output, inner sep=0pt, minimum size=10pt},
				label/.style={font=\scriptsize},
				>=stealth'
			]
			\matrix [row sep=3mm, column sep=5em] {
				\&\& \node (q2) [state] {}; \&\\
				\node (q0) [state] {}; \&
				\node (q1) [state] {}; \&\&
				\node (q4) [state] {}; \\
				\&\& \node (q3) [state] {}; \&\\
			};
			\draw (q0) edge [loop above] node [label, above] {$\neg \textit{Out}_i$} (q0);
			\draw (q0) edge [->] node [label, below] {$\textit{Out}_i \land \neg \textit{Closed}$} (q1);
			\draw (q1) edge [loop above] node [label, above] {$\neg \textit{In}_i$} (q1);
			\draw (q1) edge [->] node [label, below, xshift=4mm] {$\textit{In}_i \land \textit{Person}$} (q2);
			\draw (q1) edge [->] node [label, below, xshift=-2mm, yshift=-1mm] {$\textit{In}_i \land \neg \textit{Person}$} (q3);
			\draw (q3) edge [->] node [label, below] {$\neg \textit{Talking}$} (q4);
			\draw (q2) edge [->] node [label, below] {$\textit{Talking}$} (q4);
			\draw (q0) edge [bend left=40, ->] node [label, above] {$\textit{Closed}$} (q4);
			\draw [<-, dashed] (q0.west) -- +(-5mm, 0);
			\node (qf) [state, accepting, right=5mm of q4] {};
			\draw [->, dashed] (q4) -- (qf);
    \end{tikzpicture}
		\caption{A temporally-extended task, repeated for $i = 1, 2$.
		The missing transitions go to a failure sink state.}
    \label{fig:office-automa}
\end{figure}
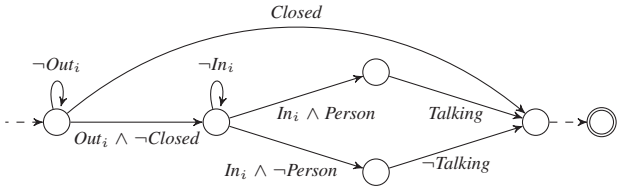

Regarding the environment dynamics, instead, we define $\Model_{2,d}$ and $\Model_{1,d}$, respectively the abstract and grid transition dynamics seen so far.
In addition, we consider a ground MDP~$\Model_{0,d}$ at which the robot movements are modelled using continuous features.
The state space $\States_0$ now contains continuous vectors $(x, y, \theta, v)$,
representing pose and velocity of agent's mobile base on the plane.
The discrete set of actions $\Actions_0$ allows to accelerate, decelerate, rotate, and a special action denotes the initiation of an interaction.

There exists goal MDPs, $\Model_2, \Model_1, \Model_0$ that capture both the dynamics and the task
defined above, which can be obtained through a suitable composition of each $\Model_{i,d}$ and~$\Automa$ \cite{brafman_2018_LTLfLDLf,icarte2018using}.
Therefore, we can still apply our tecnique to the composed goal MDP.
Since $\Model_0$ now includes continuous features we adopt Dueling DQN~\cite{wang_dueling_2016}, a Deep RL algorithm.
The plot in Figure~\ref{fig:task-cont-dqn} shows a training comparison between
the Dueling DQN agent alone (dot-dashed brown), and Dueling DQN receiving rewards from the grid abstraction (green).
\begin{figure}
	\centering
	\includegraphics{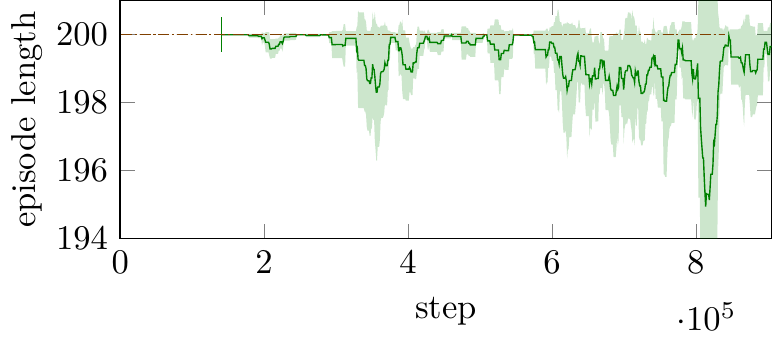}
	\caption{Dueling DQN algorithm with and without our RS. Training episode lengths, averaged over 5 runs.}
	\label{fig:task-cont-dqn}
\end{figure}
As we can see, our method allows to provide useful exploration bias even in case of extremely sparse goal states, as in this case.

\section{Related Work}

Hierarchical RL specifically studies efficiency in presence of abstractions.
Some classic approaches are MAXQ~\cite{dietterich2000hierarchical},
HAM~\cite{parr1998reinforcement} and options~\cite{sutton1999between}.
Instead of augmenting the ground MDP with options, which would result in a semi-MDP,
we use them as formalizations of partial policies.
%
In order to describe which relation should the ground MDP and its abstraction satisfy,
\cite{ravindran_model_2002, li06towards} develop MDP Homomorphisms and
approximated extensions.
Differently to these works, our method does not try to capture
spatial regularities in the domain, rather, we are interested in coarse
partitioning of neighboring states, which can hardly be approximated with
a single value.
This issue also appeared in~\cite{jothimurugan_2021_AbstractValue} in the form of non-Markovianity.
 
Our abstractions are closely related to those described
in~\cite{abel_2016_OptimalBehavior, abel_2020_ValuePreserving},
in which both the state and the action spaces differ.
Still, they do not exploit, as in our work, explicit abstract MDPs,
since they only learn in one ground model.
%
Regarding the use of Reward Shaping, \cite{gao2015potential} presented the idea of
applying RS in context of HRL, and applying it specifically to the MAXQ
algorithm.
Recently, \cite{schubert_2021_PlanbasedRelaxed} proposed a new form of biased RS for goal MDPs, with looser convergence guarantees.
With a different objective with respect to this paper,
various works consider how abstractions may be learnt instead of being pre-defined, including
\cite{Marthi07automatic-shaping, grzes2008multigrid, steccanella_hierarchical_2021}.
This is an interesting direction to follow and the models that are obtained with such techniques may be still exploited with our method.

\section{Conclusion}

In this paper, we have presented an approach to increase the sample efficiency of RL algorithms,
based on a linear hierarchy of abstract simulators and a new form of reward shaping.
While the ground MDP accurately captures the environment dynamics,
higher-level models represent increasingly coarser abstractions of it. 
We have described the properties of our RS method under different abstractions and we have shown its effectiveness in practice.
Importantly, our approach is very general, as it makes no assumptions on
the off-policy algorithm that is used, and it has minimal requirements in terms of mapping
between the abstraction layers.
As future work, we plan to compare our technique with other methods from Hierarchical RL,
especially those with similar prior assumptions and to evaluate them in a robotic application with low-level perception and control.

\section*{Acknowledgements}
This work has been supported by the ERC Advanced Grant WhiteMech (No. 834228),
by the EU ICT-48 2020 project TAILOR (No. 952215), and by the PRIN project
RIPER (No. 20203FFYLK).

\bibliography{bibliography}

\appendix
\onecolumn
\loadgeometry{articlegeometry}

\section{Proofs of formal statements}

\subsection{Proposition~\refmain{prop:optimal-convergence} -- optimal convergence}

\begin{propositionmain}{\refmain{prop:optimal-convergence}}
	Let us consider MDPs $\Model_0, \dots, \Model_{n}$ and their associated mapping functions $\Mapping_0, \dots, \Mapping_{n-1}$.
	If $\RlAlgo$ is an off-policy learning algorithm, then,
	in every $i$-th iteration of Algorithm~\refmain{alg:main}, $\Est{\Policy}^*_i$ converges to $\Policy^*_i$,
	as the number of environment interactions increase.
\end{propositionmain}
\begin{proof}
	At any iteration $i \in \{n, \dots, 0\}$ of Algorithm~\refmain{alg:main},
	we are given $\Model_i$, $\Mapping_i$ and~$\Est{V}_{i+1}^*$.
	By construction, the two instantiations of $\RlAlgo$, $\DLearner_i$ and $\Biased{\DLearner}_i$,
	perform updates from transitions generated from $\Model_i$ and $\Biased{\Model}_i$, respectively.
	Actions are selected according to $\Biased{\DLearner}_i$ which follows some exploration policy $\Biased\ExplorationPolicy$.
	Since $\Model_i$ and $\Biased{\Model}_i$ share the same state and action spaces,
	$\Biased\ExplorationPolicy$ is also an exploration policy for~$\Model_i$, in the sense of Definition~\refmain{def:off-policy-learning}.
	Therefore, ${\DLearner}_i$ also converges to $\Policy^*_i$, as the number of environment interactions $t \to \infty$.
\end{proof}

\subsection{Lemma~\refmain{lem:multi-step} -- multi-step value}

\begin{lemmamain}{\refmain{lem:multi-step}}
	Given a goal MDP $\Model$ and $\Mapping: \States \to \Abst\States$,
	for any $s \in \States$ and $\Mapping$-relative option~$o$, the optimal value of~$o$ is:
	\begin{equation}
		Q^*(s, o) =
			\sum_{k=0}^{\infty} \gamma^k \sum_{s_{1:k} \in \Mapping(s)^k} \sum_{s' \not\in \Mapping(s)}
			p(s_{1:k} s' \Given s, \Policy_o)\, \bigl( \Indicator(s' \in \Goals) + \gamma\, V^*(s') \bigr)
	\end{equation}
\end{lemmamain}
\begin{proof}
	By assumption $\Model$ is a goal MDP over some $\Goals \subseteq \States$.
	For $s \in \Goals$, we know $Q^*(s, o) = 0$. We consider $s \not\in \Goals$.
	Since the MDP $\Model$ is clear from the context, to uniform the notation, we use $p(s' \Given s, a)$ instead of $T(s, a, s')$.
	Following a similar procedure as~\cite{abel_2020_ValuePreserving}, for our definition of goal MDPs:
	\begin{align}
		{Q}^*(s, o) &\!\coloneqq 
			\Expected_{s' \Given s, o}[\, R(s, \Policy_o(s), s')\, +
				\gamma (\, \Indicator(s' \in \Mapping(s))\, {Q}^*(s', o)
				+ \Indicator(s' \not\in \Mapping(s))\, {V}^*(s'))\,] \\
			&= \sum_{s' \in \Mapping(s)} p(s' \Given s, \Policy_o(s))[\,\cdot\,] +
				\sum_{s' \not\in \Mapping(s)} p(s' \Given s, \Policy_o(s))[\,\cdot\,] \\
			&= \sum_{s' \in \Mapping(s)} p(s' \Given s, \Policy_o(s)) \,
					\gamma \, {Q}^*(s', o)\, +
				\sum_{s' \not\in \Mapping(s)} p(s' \Given s, \Policy_o(s)) \bigl(
					\Indicator(s' \in \Goals) + \gamma\, {V}^*(s')
				\bigr) 
				\label{eq:option-value-to-expand}
	\end{align}
	We abbreviate the second term of~\eqrefthis{eq:option-value-to-expand} with $\Psi$ and let $s_0 = s$.
	Then, similarly to the classic multi-step value of options~\cite{sutton1999between}, we can expand over time.
	\begin{align}
		&Q^*(s, o) = \sum_{s' \in \Mapping(s)} p(s' \Given s, \Policy_o(s)) \,
					\gamma \, {Q}^*(s', o)\, + \Psi(s, o)\\
			&= \Psi(s, o) + \gamma\, \sum_{s' \in \Mapping(s)} p(s' \Given s,
					\Policy_o(s))\,\Psi(s', o) \,+
				\gamma^2 \sum_{s', s'' \in \Mapping(s)^2} 
				p(s'\,s'' \Given s, \Policy_o(s)\,\Policy_o(s'))\, Q^*(s'', o)\\
			&= \sum_{k=0}^{\infty} \gamma^k\, \sum_{s_{1:k} \in \Mapping(s)^{k}}
				p(s_{1:k} \Given s, \Policy_o)\,\Psi(s_{k}, o)\\
			&= \sum_{k=0}^{\infty} \gamma^k\,
				\sum_{s_{1:k} \in \Mapping(s)^k} \sum_{s'\not\in \Mapping(s)}
				p(s_{1:k} s' \Given s, \Policy_o)\, \bigl(
				\Indicator(s' \in \Goals) + \gamma\, V^*(s') \bigr)
				\label{eq:options-multistep-proof-end}
	\end{align}
	which is the expression in Lemma~\refmain{lem:multi-step}.
\end{proof}

With a similar procedure, we can also show that the multi-step value of a $\Mapping$-relative
option~$o$ in a biased MDP $\Biased\Model$ with respect to $\langle \Abst\Model, \Mapping \rangle$ is:
\begin{align}
	\BiasedStar{Q}(s, o)
		&= \sum_{k=0}^{\infty} \gamma^k\,
			\sum_{s_{1:k} \in \Mapping(s)^k} \sum_{s' \not\in \Mapping(s)}
			p(s_{1:k} s' \Given s, \Policy_o) \, \bigl(
		\Indicator(s' \in \Goals) + \gamma\, \Abst{V}^*(\Mapping(s'))
			- \Abst{V}^*(\Mapping(s)) + \gamma\, V^*(s') \bigr)
\end{align}

\subsection{Lemma~\refmain{lem:marginalized-optq} -- marginalized value}
\begin{lemmamain}{\refmain{lem:marginalized-optq}}
	Let $\Model$ be an MDP and $\langle \Abst{\Model}, \Mapping \rangle$ its abstraction, satisfying assumptions~\refmain{as:goal} and \refmain{as:abstract-goal}.
	The value of any $\Mapping$-relative option~$o$ in~$\Model$ admits the following lower bound:
	\begin{equation}
		Q^*(s, o) \ge
			\sum_{\Abst{s}' \in \Abst\States \setminus \{\Mapping(s)\}} \sum_{k=0}^{\infty}
					\gamma^k\, p(\FromBlockTo{s}{k}{\Abst{s}'} \Given s, \Policy_{o})\,
					\bigl(
			\Indicator(\Abst{s}' \in \Abst{\Goals}) + \gamma\, (W_\PHomogeneity(\Mapping(s), \Abst{s}') - \PHomogeneity) \bigr)
	\end{equation}
	at any $s \in \States$, where,~$\PHomogeneity$ and~$W_\PHomogeneity$\, follow Definition~\refmain{def:equipotentials}.
\end{lemmamain}
\begin{proof}
	We recall that $p(\FromBlockTo{s}{k}{\Abst{s}'} \Given s, \Policy)$ denotes
	the probability of the event of remaining for~$k$ steps within
	$\Mapping(s)$, then reaching~$\Abst{s}'$ at the next transition, when following policy~$\Policy$, starting from~$s$.
	Similarly, we use $p(\FromBlockTo{s}{k}{{s}'} \Given s, \Policy)$ to represent
	the probability of remaining $k$~steps within $\Mapping(s)$ then reaching a specific ground state $s' \in \States \setminus \Mapping(s)$.

	To obtain the result, we marginalize the probabilities appearing in
	Lemma~\refmain{lem:multi-step} over all possible trajectories~$s_{1:k}$:
	\begin{align}
		Q^*(s, o) &= \sum_{s' \in \States \setminus \Mapping(s)} \sum_{k=0}^{\infty}
					\gamma^k\, p(\FromBlockTo{s}{k}{s'} \Given s, \Policy_{o})\, \bigl(
			\Indicator(s' \in \Goals) + \gamma\, V^*(s') \bigr)
		\label{eq:marginalized-ground-value}
	\end{align}
	Now, for all $s', k$ such that $p(\FromBlockTo{s}{k}{s'} \Given s, \Policy_{o}) > 0$,
	there is one state $s_k \in \Mapping(s)$, reachable in $k$ steps from~$s$ under~$\Policy_o$, from which $T(s_k, \Policy_o(s_k), s') > 0$.
	From Definition~\refmain{def:equipotentials}, we know $\abs{W_\PHomogeneity(\Mapping(s_k), \Mapping(s')) - V^*(s')} \le \PHomogeneity$.
	Therefore, we can provide a lower bound for each term $V^*(s')$ in the sum above:
	\begin{equation}
		Q^*(s, o) \ge \sum_{s' \in \States \setminus \Mapping(s)} \sum_{k=0}^{\infty}
					\gamma^k\, p(\FromBlockTo{s}{k}{s'} \Given s, \Policy_{o})\, \bigl(
			\Indicator(s' \in \Goals) + \gamma\, (W_\PHomogeneity(\Mapping(s), \Mapping(s')) - \PHomogeneity) \bigr)
	\end{equation}
	because $\Mapping(s_k) = \Mapping(s)$.
	It is now possible to split the sum $\sum_{s' \in \States \setminus \Mapping(s)}$ into
	${\abs{\Abst{\States}} - 1}$ sums over future blocks and marginalize among them to obtain:
	\begin{equation}
			Q^*(s, o) \ge
				\sum_{\Abst{s}' \in \Abst\States \setminus \{\Mapping(s)\}} \sum_{k=0}^{\infty}
						\gamma^k\, p(\FromBlockTo{s}{k}{\Abst{s}'} \Given s, \Policy_{o})\,
						\bigl(
				\Indicator(\Abst{s}' \in \Abst\Goals) + \gamma\, (W_\PHomogeneity(\Mapping(s), \Abst{s}') - \PHomogeneity) \bigr)
	\end{equation}
	since $\Indicator(s \in \Goals) = \Indicator(\Mapping(s) \in \Abst\Goals)$.
	This proves the lemma. With the same procedure, we also obtain the upper bound:
	\begin{equation}
			Q^*(s, o) \le
				\sum_{\Abst{s}' \in \Abst\States \setminus \{\Mapping(s)\}} \sum_{k=0}^{\infty}
						\gamma^k\, p(\FromBlockTo{s}{k}{\Abst{s}'} \Given s, \Policy_{o})\,
						\bigl(
				\Indicator(\Abst{s}' \in \Abst\Goals) + \gamma\, (W_\PHomogeneity(\Mapping(s), \Abst{s}') + \PHomogeneity) \bigr)
	\end{equation}
\end{proof}

\subsection{Theorem~\refmain{th:exploration-loss-bound} -- exploration loss}

\begin{theoremmain}{\refmain{th:exploration-loss-bound}}
	Let $\Model$ and $\langle \Abst\Model, \Mapping \rangle$ be and MDP and an abstraction satisfying assumptions~\refmain{as:goal} and \refmain{as:abstract-goal},
	and let $\Biased\Model$ be the biased MDP.
	If $\epsilon$ is the abstract similarity of $\Policy^*$ and $\BiasedStar{\Policy}$, and the abstract value approximation is~$\PHomogeneity$, then, the exploration loss of $\langle \Abst\Model, \Mapping \rangle$ satisfies:
	\begin{equation}
		L(\Model, \langle \Abst\Model, \Mapping \rangle) \le
			\frac{2 \abs{\Abst\States} (\POptAccurate + \gamma\, \PHomogeneity)}{(1-\gamma)^2}
	\end{equation}
\end{theoremmain}
\begin{proof}
	Any policy $\Policy$ can be represented as a set $\Options$, composed of $\Mapping$-relative options
	whose initiation sets are partitioned according to $\Mapping$ and $\forall s \in \States,
	\exists o \in \Options: {\Policy_o(s) = \Policy(s)}$.
	Let $\Options^*$ and $\BiasedStar\Options$ be the $\Mapping$-relative options associated to $\Policy^*$ and $\BiasedStar\Policy$.
	We now compute what is the difference in value between executing $\Policy^*$ and $\BiasedStar\Policy$,
	for one option each, then following $\Policy^*$ afterwards.
	For any $s \in \States \setminus \Goals$, let $o^*$ and $\BiasedStar{o}$ be the relevant options in $\Options^*$ and $\BiasedStar{O}$, respectively.
	We bound the following difference in value:
	\begin{align}
			\abs{Q^*(s, o^*) - Q^*(s, \BiasedStar{o})} =
			Q^*(s, o^*) - Q^*(s, \BiasedStar{o})
	\end{align}
	From an application of the upper and lower bound of Lemma~\refmain{lem:marginalized-optq},
	\begin{align}
		&\abs{Q^*(s, o^*) - Q^*(s, \BiasedStar{o})} \\
			&\le \sum_{\Abst{s}' \in \Abst\States \setminus \{\Mapping(s)\}} \sum_{k=0}^{\infty}
						\gamma^k\, p(\FromBlockTo{s}{k}{\Abst{s}'} \Given s, \Policy_{o^*})\,
						\bigl(
				\Indicator(\Abst{s}' \in \Abst\Goals) + \gamma\, (W_\PHomogeneity(\Mapping(s), \Abst{s}') + \PHomogeneity) \bigr)\, - \notag\\
			&\qquad\sum_{\Abst{s}' \in \Abst\States \setminus \{\Mapping(s)\}} \sum_{k=0}^{\infty}
						\gamma^k\, p(\FromBlockTo{s}{k}{\Abst{s}'} \Given s, \Policy_{\BiasedStar{o}})\,
						\bigl( \Indicator(\Abst{s}' \in \Abst\Goals) + \gamma\, (W_\PHomogeneity(\Mapping(s), \Abst{s}') - \PHomogeneity) \bigr)\\
			&=\sum_{\Abst{s}' \in \Abst\States \setminus \{\Mapping(s)\}}
						\bigl( \Indicator(\Abst{s}' \in \Abst\Goals) + \gamma\, W_\PHomogeneity(\Mapping(s), \Abst{s}') \bigr)
						\sum_{k=0}^{\infty}
						\gamma^k\, \bigl(\, p(\FromBlockTo{s}{k}{\Abst{s}'} \Given s, \Policy_{o^*}) - p(\FromBlockTo{s}{k}{\Abst{s}'} \Given s, \Policy_{\BiasedStar{o}}) \bigr)\, + \notag\\
			&\qquad\sum_{\Abst{s}' \in \Abst\States \setminus \{\Mapping(s)\}} \sum_{k=0}^{\infty}
						\gamma^k\, \bigl(\, p(\FromBlockTo{s}{k}{\Abst{s}'} \Given s, \Policy_{o^*}) + p(\FromBlockTo{s}{k}{\Abst{s}'} \Given s, \Policy_{\BiasedStar{o}}) \bigr)
						\, \gamma\, \PHomogeneity
	\end{align}
	Now we apply Definition~\refmain{def:similar-policies} of abstract similarity and bound
	\begin{align}
		\abs{Q^*(s, o^*) \,&-\, Q^*(s, \BiasedStar{o})} \\
			&\le \sum_{\Abst{s}' \in \Abst\States \setminus \{\Mapping(s)\}}
					\bigl( \Indicator(\Abst{s}' \in \Abst\Goals) + \gamma\, W_\PHomogeneity(\Mapping(s), \Abst{s}') \bigr)
			\sum_{k=0}^{\infty}
						\gamma^k\, \POptAccurate\, +
			\sum_{\Abst{s}' \in \Abst\States \setminus \{\Mapping(s)\}} \sum_{k=0}^{\infty}
						\gamma^{k+1}\, 2 \, \PHomogeneity\\
			&= \sum_{\Abst{s}' \in \Abst\States \setminus \{\Mapping(s)\}}
					\Bigl(\bigl( \Indicator(\Abst{s}' \in \Abst\Goals) + \gamma\, W_\PHomogeneity(\Mapping(s), \Abst{s}') \bigr)
					\frac{\POptAccurate}{1-\gamma} +
					\frac{2\, \gamma\, \PHomogeneity}{1-\gamma} \Bigr)
	\end{align}
	Since in a goal MDP the maximum value is 1, we know 
	$W_\PHomogeneity(\Mapping(s), \Abst{s}') \le (1 + \PHomogeneity)$, for all $s \in \States, \Abst{s}' \in \Abst\States$.
	Moreover, if $W_\PHomogeneity$ satisfies condition \eqrefmain{eq:equipotentials},
	the function $W_{\PHomogeneity}^{\text{clip}}(\Abst{s}, \Abst{s}') \coloneqq \min\{1, W_\PHomogeneity(\Abst{s}, \Abst{s}')\}$
	also satisfies it. Concluding,
	\begin{align}
			\abs{Q^*(s, o^*) - Q^*(s, \BiasedStar{o})}
			&\le \sum_{\Abst{s}' \in \Abst\States \setminus \{\Mapping(s)\}}
					\Bigl(\bigl( 1 + \gamma \bigr)
					\frac{\POptAccurate}{1-\gamma} +
					\frac{2\, \gamma\, \PHomogeneity}{1-\gamma} \Bigr)\\
			&\le \abs{\Abst\States}
					\Bigl(
					\frac{2\, \POptAccurate}{1-\gamma} +
					\frac{2\, \gamma\, \PHomogeneity}{1-\gamma} \Bigr)\\
			&= \frac{2 \abs{\Abst\States} (\POptAccurate + \gamma\, \PHomogeneity)}{1-\gamma}
	\end{align}
	This is a bound on the value loss of executing a single option from the set $\BiasedStar\Options$.
	To this option set the results from \cite{abel_2020_ValuePreserving} apply and equation~(3), in particular.
	So we obtain the final result:
	\begin{equation}
		L(\Model, \langle \Abst\Model, \Mapping \rangle) \le
			\frac{2 \abs{\Abst\States} (\POptAccurate + \gamma\, \PHomogeneity)}{(1-\gamma)^2}
	\end{equation}
\end{proof}

\section{Experimental details}

\subsection{Details for each plot}

In this section, we provide details and hyper-parameters for each plot of the main paper.
An even more comprehensive list can be found in the subdirectories of the archive \verb|experiments.tar.bz2|,
that is distributed together within the software repository\footnote{\texttt{https://github.com/cipollone/multinav2}},
at release \verb|v0.2.7|.
The structure of this archive will be discussed in section~\refthis{sec:using}.

\subsubsection*{Figure~\refmain{fig:4rooms-steps}}
\paragraph{Environment}
The ground environment, $\Model_1$, is the 4-rooms map appearing in~\cite{sutton1999between,abel_2020_ValuePreserving}.
Transitions have 4\% failure probability (another random action is executed instead).
The agent starts in the lower-left room, coordinate $(1, 9)$ (vertical coordinate increases downward).
Episode maximum length 50 steps, discounting $\gamma_1 = 0.98$.
The abstraction, $\Model_2$, is a 4-states environment with 10\% failure probability, discounting $\gamma_2 = 0.9$.

\paragraph{Algorithms}
Each point in the plot shows average and standard deviation computed over evaluations of 10 different runs.
Each evaluation computes the average episode length (time to goal or to timeout) from 10 episodes.
\begin{itemize}
	\item DelayedQ algorithm,
		$\epsilon_1 = 0.01$, $\delta = 0.1$, $\text{MaxR} = 1.0$, $m = 15$,
		60000 timesteps.
	\item Q-learning,
		learning rate decay from 0.1 to 0.02, 
		$\epsilon$-greedy exploration decay from 1.0 to 0.0.
		60000 timesteps.
	\item Q-learning with our Reward Shaping,
		learning rate decay from 0.1 to 0.02,
		$\epsilon$-greedy exploration decay from 1.0 to 0.0.
		50000 timesteps.
		The plot shows the performance of~$\Est{\Policy}_1^*$.
\end{itemize}

\subsubsection*{Figure~\refmain{fig:8rooms-steps}}
\paragraph{Environment}
The ground environment, $\Model_1$, is the 8-rooms map of Figure~\refmain{fig:rooms8}.
Transitions have 4\% failure probability.
The agent starts in the upper-left room, coordinate $(2, 3)$.
Episode maximum length 70 steps, discounting $\gamma_1 = 0.98$.
The abstraction, $\Model_2$, is a 8-states environment with 10\% failure probability, discounting $\gamma_2 = 0.9$.

\paragraph{Algorithms}
Each point in the plot shows average and standard deviation computed over evaluations of 10 different runs.
Each evaluation computes the average episode length from 10 episodes.
\begin{itemize}
	\item DelayedQ algorithm,
		$\epsilon_1 = 0.005$, $\delta = 0.1$, $\text{MaxR} = 1.0$, $m = 15$,
		300000 timesteps.
	\item Q-learning,
		learning rate decay from 0.05 to 0.01, 
		$\epsilon$-greedy exploration decay from 1.0 to 0.1.
		300000 timesteps.
	\item Q-learning with our Reward Shaping,
		learning rate decay from 0.05 to 0.01,
		$\epsilon$-greedy exploration decay from 1.0 to 0.1.
		290000 timesteps.
		The plot shows the performance of~$\Est{\Policy}_1^*$.
\end{itemize}

\subsubsection*{Figure~\refmain{fig:8rooms-inv}}
\paragraph{Environment}
Same environment as in Figure~\refmain{fig:8rooms-steps}.

\paragraph{Algorithms}
Each point in the plot shows average and standard deviation computed over evaluations of 10 different runs.
Each evaluation computes the average episode length from 10 episodes.
The plots show the performance of~$\Est{\Policy}_1^*$.
\begin{itemize}
	\item Q-learning with our Reward Shaping,
		learning rate decay from 0.05 to 0.01,
		$\epsilon$-greedy exploration decay from 1.0 to 0.1.
		290000 timesteps.
	\item Q-learning with the same potential we use in our Reward Shaping,
		with the only difference the potential is zero when the episode ends.
		Learning rate decay from 0.05 to 0.01,
		$\epsilon$-greedy exploration decay from 1.0 to 0.1.
		290000 timesteps.
\end{itemize}

\subsubsection*{Figure~\refmain{fig:reach-abs-errors}}
\paragraph{Environment}
Same environment as in Figure~\refmain{fig:8rooms-steps}.

\paragraph{Algorithms}
Each point in the plot shows average and standard deviation computed over evaluations of 10 different runs.
Each evaluation computes the average episode length from 10 episodes.
The three agents are trained with the same parameters:
\begin{itemize}
	\item Q-learning with our Reward Shaping,
		learning rate decay from 0.05 to 0.01,
		$\epsilon$-greedy exploration decay from 1.0 to 0.1.
		400000 timesteps.
		The plot shows the performance of~$\Est{\Policy}_1^*$.
\end{itemize}
The only difference between the three executions is in the potential induced by the abstract MDP $\Model_2$.
This is described in the main paper.

\subsubsection*{Figure~\refmain{fig:task-cont-dqn}}
\paragraph{Environment}

\begin{figure}
	\centering
	\includegraphics{./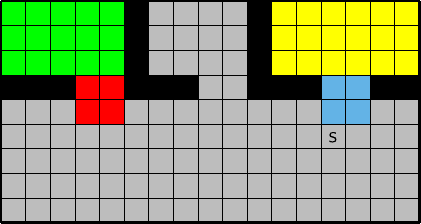}
	\caption{Map used for the plots in Figure~\refmain{fig:task-cont-dqn}.}
	\label{fig:office-map}
\end{figure}
In the most abstract environment, $\Model_{2,d}$,
each state represents a color of Figure~\refthis{fig:office-map}.
A transition is possible if two colors are adjacent in the map. 
Failure probability 10\% and discounting $\gamma_2 = 0.9$.
States of the lower model, $\Model_{1,d}$, represent the cells of Figure~\refthis{fig:office-map}. 
Failure probability 4\% and discounting $\gamma_1 = 0.98$.
In the ground dynamics, $\Model_{0,d}$, the agent moves with continuous increments over the same map.
Specifically, the actions are $\Actions_0 = \{ \Const{Left}, \Const{Right}, \Const{Accelerate}, \Const{Decelerate}, \Const{Interact}, \Const{NoOp} \}$.
$\Const{Left}$ and $\Const{Right}$ actions rotate the agent base by increments of 40°.
$\Const{Accelerate}$ and $\Const{Decelerate}$ increase and decrease linear velocity by 0.2 cells per step.
Maximum and minimum velocity are 0.6 and 0.0 cells per step.
The states are composed of $(x, y, \cos(\theta), \sin(\theta), v \cos(\theta), v \sin(\theta)) \in \States_0$.
Failure probability 3\%, episode maximum length 200, discounting $\gamma_0 = 0.99$.
Each model dynamics is $\Model_{i,d}$ is then composed with the automaton
of Figure~\refmain{fig:office-automa} to obtain a goal MDP~$\Model_i$.
Training is performed over each $\Model_i$. Further references are provided in the main paper.
The mapping $\Mapping_1$ preserves the current color and automaton state.
The mapping $\Mapping_0$ preserves the current cell and automaton state.
Note that, unlike in the 8-rooms map, the different regions touch in more than one location.

\paragraph{Algorithms}
Each point in the plot shows average and standard deviation computed over evaluations of 5 different runs.
Each evaluation computes the average episode length from 2 episodes.
\begin{itemize}
	\item Dueling DQN,
		learning rate 0.0005, 
		$\epsilon$-greedy exploration decay from 0.85 to 0.05,
		replay buffer of 80000 transitions,
		train batch size 64, 4 optimization per environment step,
		target network update frequency 1000 steps,
		840000 timesteps in total.
	\item Dueling DQN with our Reward Shaping.
		Same parameters as the previous run,
		700000 timesteps in total.
		The plot shows the performance of~$\BiasedStar{\Est{\Policy}}_0$.
\end{itemize}

Both agents used a Q-network composed of dense layers of sizes $[64, 64, 64\cdot\abs{\Automa}, 6]$,
interleaved with ReLu activations and batch normalization.
Observations are compositions of $\States_0$ and automaton states.
The first two dense layers process configurations $s \in \States_0$, only.
Their output features are then duplicated for each automaton state and processed independently in the third layer.
A final selection layer chooses which of the $\abs{\Automa}$ features is active, depending on the current automaton state, and it passes it to the output layer.
The Dueling DQN implementation is provided by Ray RLlib v1.13.0\footnote{\texttt{https://docs.ray.io/en/latest/rllib/index.html}}
and the network is implemented in Tensorflow~2.

\subsection{Hyper-parameter selection}

The environments described above are defined over simple dynamics.
Assuming that these are given, we focus on the algorithms hyper-parameters instead.
We tested with many configurations, mostly during a development phase of the software.
At the final software version, instead, according to the available computational budget, we performed a limited hyper-parameter grid search using Tune~\cite{liaw2018tune}.
On average, we tested from two to three values for most of the parameters described above.
We report some examples here:
\begin{description}
	\item[Delayed-Q]
		We have computed the theoretical hyper-parameters satisfying the PAC bounds reported in the original paper~\cite{strehl2006pac}.
		In particular, $m$, the number of samples between each attempted update was too high for the algorithm to be competitive with the other approaches.
		So, we experimented with much more practical values $[1000, 200, 50, 30, 15]$.
		A stable convergence was still guaranteed with $m = 15$,
		so we preferred this this lower value.
		The value of optimism of updates $\epsilon_1$ seemed to have mild impact with respect to~$m$. 
	\item[Q-learning]
		$\epsilon$-greedy exploration with linear decay is a common exploration strategy for Q-learning.
		We observed that 1.0 to 0.0 was a better performing decay with respect to lower values of $\epsilon$.
		Learning rates of $0.05$ showed a more stable training with respect to higher values $0.1, 0.2$,
		which appeared to be noisier in comparison.
	\item[Q-learning with our shaping]
		We have always used the same settings as for classic Q-learning.
\end{description}
A very important parameter is the total number of environment steps.
Since this directly determines sample efficiency and it is the property we wanted to optimize,
we started with longer trainings of about 300000 steps, for discrete domains, then decreasing it down to about 50000,
until it was still possible to observe convergence with one of the techniques.
\begin{description}
	\item[Dueling-DQN]
		We have experimented with networks of sizes $[64, 64, 64\cdot\abs{\Automa}]$,
		$[64, 64\cdot\abs{\Automa}]$, 
		$[128, 64\cdot\abs{\Automa}]$, 
		$[64, 64\cdot\abs{\Automa}, 64\cdot\abs{\Automa}]$.
		We tested learning rates of $[0.01, 0.005, 0.002, 0.001, 0.0005]$,
		replay buffer of sizes $50000, 80000, 100000$,
		target network updates between $200, 500, 1000, 2000$ steps.
		Among the limited search we could perform on the large product of possibilities,
		we settled on the parameters that showed a more stable convergence withing the
		total time steps budget.
		We used the same parameters for experimenting with and without shaping.
\end{description}

\subsection{Hardware information}
All experiments have been conducted on a Dell XPS 15, with an Intel Core i7-10750H CPU with 12 threads,
NVIDIA GeForce GTX 1650 Ti GPU, 15GB of RAM and 4GM of GPU memory.
Operating System Debian 11. We provide all the software information in Section~\refthis{sec:using}.

\section{Reproducing and using the software}
\label{sec:using}
\subsection{Description of the software}

The software is a Python~3 package called \texttt{multinav}\footnote{\texttt{https://github.com/cipollone/multinav2}}.
In input, it receives a configuration file in Yaml format, with all the environment and algorithm hyper-parameters.
In output, it generates checkpoints with the agent's parameters and log files with all the evaluation metrics.
This section explains the working principles of the software and the content of the attached archive.
For installation instructions, refer to the Installation and execution section.

We call \emph{run} a single execution of the software from a given seed.
With \emph{experiment} we refer to one or multiple runs executed from a single set of hyper-parameters
(an experiment generates a line in a plot).
In the archive attached to the present submission, there is a directory \texttt{experiments},
with the following substructure:
\begin{verbatim}
	<paper-figure>/<experiment-name>/<run-id>/logs/
	<paper-figure>/<experiment-name>/<run-id>/models/
\end{verbatim}
The \verb|<experiment-name>| is an identifier for an experiment.
For example, the Dueling DQN baseline of Figure~\refmain{fig:task-cont-dqn} is called 
\texttt{office2-0dqn}, while our RS approach is under \texttt{office2-0sh}.
\texttt{office2} is the name of an environment, while 0 refers to the fact that this
environment is the ground MDP~$\Model_0$.
\verb|<run-id>|, instead, is an identifier for the single run.
Usually, this is an integer from 0 to 9.

Under \texttt{logs} we find these important files:
\begin{description}
	\item[\texttt{experiment.yaml}] is the configuration file of an experiment.
		Inside it, there is the \verb!n-runs! parameter that controls how many runs are associated to this experiment.
		This file is not specifically needed for re-execution of single runs.
	\item[\texttt{run-options.yaml}] is the configuration file of a single run.
		This is the file that is actually passed to each execution of the \texttt{multinav} software.
		\texttt{run-options.yaml} contains all the algorithm and environment hyper-parameters that are in \texttt{experiment.yaml},
		plus some details for exact re-execution, such as the seed for this run and the commit hash of the software at the time of execution.
	\item[\texttt{algorithm-diff.patch}] In addition to the commit hash, which is stored in \texttt{run-options.yaml},
		we also save a patch file. This allow us to keep track, at the time of execution, of any source code changes since the latest commit.
\end{description}

The files in these directories can be used to inspect our work and re-execute trainings.
So, to re-execute run number 3 from our reward shaping agent, shown in Figure~\refmain{fig:8rooms-steps}, we do:
\begin{verbatim}
	cd multinav/
	cp -r ../experiments/fig2b/ outputs
	git checkout <commit-hash>
	git apply outputs/rooms8-1sh/3/logs/algorithm-diff.patch
	python -m multinav train --params outputs/rooms8-1sh/3/logs/run-options.yaml
\end{verbatim}
This will (over-)write the output files under the same directories.
It is possible to modify the keys \texttt{logs-dir} and \texttt{model-dir}
inside \texttt{run-options.yaml} to specify different output paths.

Copying the experiment directory under \texttt{outputs} is necessary,
because some agents might load data from other directories in \texttt{outputs}.
For example, any reward shaping agent that is learning on some $\Model_1$ needs to load the value function from $\Model_2$.
The git commands guarantee exact execution and needs to be issued at most once per experiment.
Finally, the last command starts the training from the same seed and configuration.

\subsection{Installation and execution}

\subsubsection*{Installation}
The software will be publicly released with a free software licence and it will be installable
from GitHub as \verb|pip install git+<url>|.
This will also install all the Python dependencies.
The only non-Python dependency is a tool called
``Lydia'' that we use for converting task specifications into reward functions
and Reward Machines (see below).

The instruction we provide here, instead, are for a full development installation,
because this procedure guarantees the exact dependencies we have used for
running the experiments.

\paragraph{Python version}
We use Python 3.9. If this version is already installed on the system, the user can skip this paragraph.
The 3.9 Python version can be installed with \texttt{pyenv}\footnote{\texttt{https://github.com/pyenv/pyenv}}.
Please refer to Pyenv installation instructions (note, in particular, the Python build dependencies).

\paragraph{Lydia}
Lydia is the only non-Python dependency we rely on.
Please refer to the package website\footnote{\texttt{https://github.com/whitemech/lydia}}
for installation instructions.
Simply, this reduces to executing from a Docker image:
\begin{verbatim}
	docker pull whitemech/lydia:latest
	alias lydia='docker run --rm -v$(pwd):/home/default whitemech/lydia lydia "$@"'
\end{verbatim}
Note that bash needs single quotes \textquotesingle. They may be wrongly typeset by LaTex in the listing above.

\paragraph{Package dependencies}
Python dependencies are managed by Poetry\footnote{\texttt{https://python-poetry.org/docs/}},
that we use to pin and install dependences versions.
Please refer to the Python Poetry website for installation instructions.
With a working \texttt{poetry} installation, it is sufficient to run
\begin{verbatim}
	cd multinav/
	poetry env use 3.9
	poetry install
\end{verbatim}
to install all the Python dependencies at the same version we have used for development.
It is advised to do so to avoid dependency issues.

\paragraph{Rllib patch}
When working with RLlib\footnote{\texttt{https://docs.ray.io/en/latest/rllib/index.html}} version 1.13.0,
we have encountered a bug in the DQN implementation. 
We provide a patch file to apply the necessary changes.
\begin{verbatim}
	ENV_PATH=$(poetry env info --path)
	RAY_PATH=$ENV_PATH/lib/python3.9/site-packages/ray
	cp dqn-fix.patch $(ENV_PATH)/
	cd $(ENV_PATH)
	patch -i dqn-fix.patch rllib/agents/dqn/dqn.py
\end{verbatim}
Make sure to execute these commands after \texttt{poetry install} as new installations
might revert these changes.

\paragraph{GPU}
The instructions above work for both CPU and GPU execution.
For CPU execution, the user should edit the \texttt{run-options.yaml} files
to set the \texttt{num\_gpus} option to 0.
If an NVIDIA GPU is present in the system. The user only needs to have the
appropriate drivers and CUDA runtime installed.

\subsubsection*{Execution}

Now that the dependency are installed, we can enter the Python environment containing
all the dependencies with
\begin{verbatim}
	cd multinav/
	poetry shell
\end{verbatim}
We assume all the following commands are executed inside this environment.

The program receives command line options. We can receive some help with
\begin{verbatim}
	python -m multinav --help
\end{verbatim}
Most of the parameters are provided through configuration files in Yaml format.
We provide many examples of \texttt{run-options.yaml} files in the attached archive.

To start a training we execute:
\begin{verbatim}
	python -m multinav train --params run-options.yaml
\end{verbatim}
To load and execute an already trained agent, we run:
\begin{verbatim}
	python -m multinav test --params <run-options.yaml> --load <checkpoint> [--render]
\end{verbatim}
For example, for a specific run,
\begin{verbatim}
	python -m multinav test --params outputs/office2-1sh/0/logs/run-options.yaml \
		--load outputs/office2-1sh/0/models/model_200000.pickle
\end{verbatim}
Refer to the previous section for a correct use of run options file and directories.

In the source code, the ``active'' agent refers to $\Biased{\DLearner}_i$ of Algorithm~\refmain{alg:main}.
The output is a value function/policy $\BiasedStar{\Est{\Policy}}$ stored under \texttt{models/model\_<step>.pickle}.
The ``passive'' agent refers to $\DLearner_i$, whose estimated optimum $\Est{\Policy}^*$ is stored under \texttt{models/Extra\_<step>.pickle}.

\end{document}